\newcommand{\cB}{\mathcal{B}}
\newcommand{\spn}{\mathrm{span}}
\newcommand{\wt}{\widetilde}
\newcommand{\cC}{\mathcal{C}}
\newcommand{\eps}{\varepsilon}
\def\qed{\hfill\ensuremath{\square}}
\newcommand\N{\mathbb N}
\newcommand\R{\mathbb R}
\newtheorem{theorem}{Theorem}[section]
\newtheorem{definition}{Definition}[section]
\newtheorem{lemma}[theorem]{Lemma}
\newcommand{\eqlabel}[1]{\label{#1}\tag{#1}}
\newcommand{\norm}[1]{\ensuremath{\left\lVert #1 \right\rVert}}
\newcommand{\inparen}[1]{\left(#1\right)}             
\newcommand{\inbraces}[1]{\left\{#1\right\}}           
\newcommand{\insquare}[1]{\left[#1\right]}             
\newcommand{\inangle}[1]{\left\langle#1\right\rangle} 
\newcommand{\deter}[1]{\operatorname{det}\left(#1\right)}
\title{\bf Fair and Diverse DPP-based Data Summarization \footnote{A short version of this paper appeared in the workshop FAT/ML 2016 - \url{https://arxiv.org/abs/1610.07183}}}
\author[1]{L. Elisa Celis}
\author[2]{Vijay Keswani}
\author[3]{Damian Straszak}
\author[4]{Amit Deshpande}
\author[5]{Tarun Kathuria}
\author[6]{Nisheeth K. Vishnoi}
\affil[1,2,3,6]{\small \'{E}cole Polytechnique F\'{e}d\'{e}rale de Lausanne (EPFL), Switzerland}
\affil[4]{Microsoft Research, India}
\affil[5]{UC Berkeley}
\begin{document}

\maketitle
 
\begin{abstract}
Sampling methods  that choose a subset of the data proportional to its {\em diversity} in the feature space are popular for data summarization.
However, recent studies have noted the occurrence of {\em bias} -- under or over representation of a certain gender or race -- in  such data summarization methods.
In this paper we initiate a study of the problem of outputting a diverse and  {\em fair} summary of a given dataset. 
We work with a well-studied determinantal measure of diversity and corresponding distributions (DPPs) 
and present a framework that allows us to incorporate a  general class of fairness constraints into such distributions.
Coming up with efficient  algorithms to sample from these constrained determinantal distributions, however, suffers from a  complexity barrier and we present a fast sampler that is provably good when the input vectors satisfy a natural property.
Our experimental results on a real-world and an image dataset  show that the diversity of the samples produced by adding fairness constraints is not too far from the unconstrained case, and we also provide a theoretical explanation of it.

\end{abstract}

\newpage



\section{Introduction}
A  problem  facing many services  -- from search engines and news feeds to machine learning --  is data summarization: how can one select a small but representative, i.e., {\em diverse}, subset from a large dataset. 
For instance, Google Images outputs a small subset of images from its enormous dataset given a user query. 
Similarly, in training a learning algorithm one may be required to choose a subset of data points to train on as training on the entire dataset may be  costly. 
However, data summarization algorithms prevalent in the online world have been recently shown to be biased  with respect to sensitive attributes such as gender, race and ethnicity.
For instance, a recent study found  evidence of systematic under-representation of women in search results \cite{KayMM2015}. 
Concretely, the above work studied the output of Google Images for various search terms involving occupations and found, e.g., that for the search term ``CEO'', the percentage of women in top 100 results was $11\%$, significantly lower than the ground truth of $27\%$.
Through studies on human subjects, they also found that such  misrepresentations have the power to influence people's perception about reality. 
Beyond humans, since data summaries are used to train algorithms, there is a danger that these biases in the data might be passed on to the algorithms that use them;  a phenomena that is being revealed more and more in automated data-driven processes in education, recruitment, banking, and judiciary systems, 
see \cite{ON2016}.
A  robust and widely deployed method for data summarization is to associate a {\em diversity score} to each subset and select a subset with probability proportional to this score; see \cite{Hesabi2015}. 
This paper focuses on a concrete geometric measure of diversity of a subset $S$ of a dataset $\{v_x\}_{x \in X}$ of vectors -- the {\em determinantal measure} denoted by $G(S)$ \cite{Kulesza2012}; and the resulting probability distribution is called a determinantal point process (DPP). 
$G(S)$ generalizes the correlation measure for two vectors to multiple vectors and, intuitively, the larger $G(S)$, the more diverse is $S$ in the feature space.
Among benefits of $G(\cdot)$ are its overall simplicity, wide applicability -- not depending on combinatorial properties of the data, and efficient computability. 
A potential downside might be the additional effort required in modeling, i.e., to represent the data in a suitable vector form so that the geometry of the dataset indeed corresponds to diversity.
Despite the well-acknowledged ability of DPPs to produce diverse subsets, unfortunately, there seems to be no obvious way to ensure that this also guarantees {\em fairness} in the DPP samples in the form of appropriate representation of sensitive attributes in the subset selected.
Partially, this is due to the fact that fairness could mean different things in different contexts.
For instance, consider a dataset in which each data point has a gender.
One notion of fairness, useful in ensuring that the ground truth does not get distorted, is {\em proportional representation}: i.e., the fraction of Males (respectively Females) in the output set should be identical to that in the input dataset \cite{KayMM2015}.
Another notion of fairness, argued to be necesseary to reverse the effect of historical biases \cite{koriyama2013optimal}, could be {\em equal representation} -- the number of Males is equal to that of Females {\em independent} of the ratio in the input dataset.
While these measures of fairness  have natural generalizations to the case when the number of sensitive types is more than two, and can be refined in several ways, one thing remains common: they all operate  in the combinatorial space of sensitive attributes of the data points.
\begin{figure}
\centering
{\includegraphics[height=7cm]{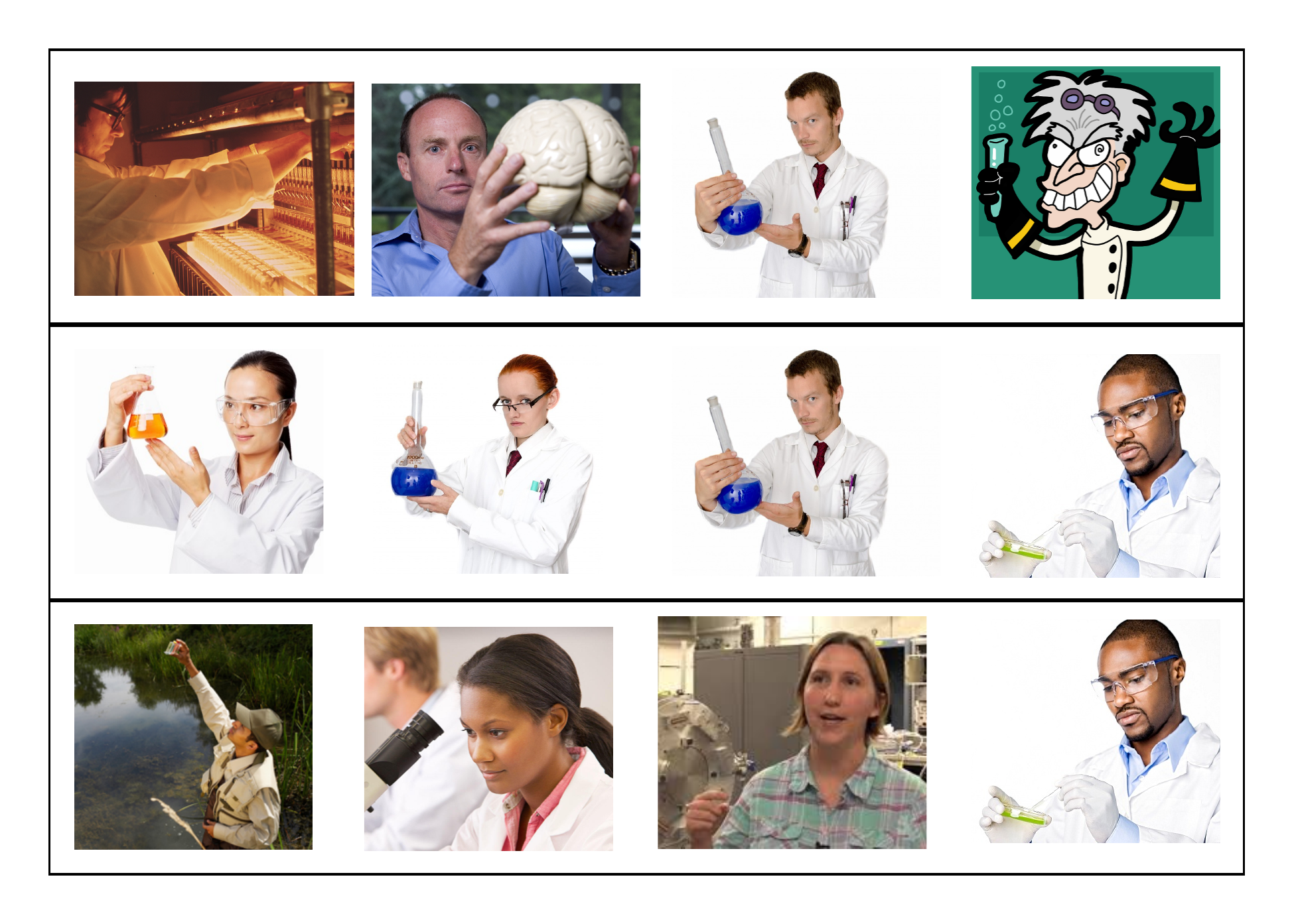}}
\caption{\small{Example sets of images displaying tradeoffs between fairness and geometric diversity. The top row of images is diverse in the geometric sense but not fair with respect to gender of race. The second row of images seems fair with respect to these sensitive features but is not diverse in the feature space. Our goal is to produce a subset of images that is visually distinct and demographically varied, as depicted in the bottom row.\vspace{-.2in}}}
\label{fig:example}
\end{figure}

Simple examples 
(see, e.g., Figure \ref{fig:example}) 
show that, in certain settings, geometric diversity does not  imply fairness and vice-versa; however, there seems to be no intrinsic barrier in attaining both. 
We initiate a rigorous study of the problem of incorporating fairness with respect to sensitive attributes of data in DPP-based sampling for data summarization.
Our contributions are: A framework  that can incorporate a wide class of notions of fairness with respect to disjoint sensitive attributes and, conditioned on being fair in the specified sense, outputs subsets where the probability of a set is still proportional to $G(\cdot)$.
In particular, we model the problem as sampling from a {\em partition} DPP -- the parts correspond to different sensitive attributes and the goal is to select a specified number of points from each.
Unfortunately, the problem of sampling from partition DPPs has been recently shown to be intractable in a  strong sense \cite{CDKSV17} and the question of designing fast algorithms for it, at the expense of being approximate,  has been open.
Our main technical result is  a linear time algorithm (see Section \ref{sec:sample_project}) to sample from partition DPPs that is guaranteed to output samples from close to the DPP distribution under a  natural condition on the data (see Definition \ref{beta-balance}).  
We  prove that random data matrices satisfy this condition  in Section \ref{sec:betaConcentration}.
Experimentally, we run our algorithm on the Adult dataset \cite{datasetUCI} and a curated image dataset with various parameter settings and observe a marked improvement in fairness without compromising geometric diversity by much.
{
A  theoretical justification of this low {\em price of fairness} is provided in Section~\ref{sec:priceOfFairness};
while there have been few works on controlling fairness, ours is the first to give a rigorous, quantitative price of
fairness guarantee in any setting.
}
Overall, our work gives a general and rigorous algorithmic solution to the problem of controlling bias in DPP-based sampling algorithms for data summarization while maximizing diversity. 
\subsection{Related Work}
DPP-based sampling has been  deployed for many data summarization   tasks including  text and  images  \cite{Kulesza2011}, videos \cite{GongCGS14}, documents \cite{Lin2012}, recommendation systems \cite{Zhou09}, and sensors \cite{Krause2008}; and the study of DPPs with additional budget or resource constraints is of importance. 
While for unconstrained DPPs there are efficient algorithms to sample \cite{HKPV05}, 
the problem of sampling from constrained DPPs is intractable; see \cite{CDKSV17},  where pseudopolynomial time algorithms for partition DPPs are presented.
There is also work on approximate MCMC algorithms for sampling from various discrete point processes (see \cite{RebeschiniK15, AGR16} and the references therein),  and algorithms that are efficient for constrained DPPs under certain restrictions on the data matrix and constraints (see \cite{LJS16} and the references therein). 
To the best of our knowledge, ours is the first algorithm for constrained DPPs that is near-linear time. 
Our algorithm is a greedy, approximate algorithm, and can be considered an extension of a similar algorithm for unconstrained DPPs given by \cite{DV06}.
Finally, our work contributes towards an ongoing effort  to measure, understand and incorporate fairness in algorithms (e.g., see \cite{BS2015,IBN2016,Dwork2012,ZafarVGG17}).

\section{Our Model} \label{sec:model}
In this section we present the formal notions,  model and other theoretical constructs studied in this paper.
$X$ will denote the dataset and we let $m$ denote its size.
We assume that for each $x \in X$, we are given a (feature) vector $v_x \in \mathbb{R}^n$, where $n \leq m$ is the dimension of the data. 
Let $V$ denote the $m \times n$ matrix whose rows correspond to the vectors $v_x$ for $x \in X$.
For a set $S \subseteq X$, we  use $V_S$ to denote the submatrix of $V$ that is obtained by picking the rows of $V$ corresponding to the elements of $S$.
We can now describe geometric diversity formally.

\begin{definition}{\bf (Geometric Diversity)}
\label{def:diversity}
Given a dataset $X$ and the corresponding feature vectors $V \in \mathbb{R}^{m \times n}$, the geometric diversity of a subset $S \subseteq X$ is defined as 
$G(S) := \deter{V_SV_S^\top},$ which is the squared volume of the parallelepiped  spanned by the rows of $V_S$.
\end{definition}

\noindent
This volume generalizes the correlation measure for two vectors to multiple vectors and, intuitively, the larger the volume, the more diverse is $S$ in the feature space; see Figure \ref{fig:volDet} for an illustration.
Geometric diversity gives rise to the following distribution on subsets known as a  determinantal point process (DPP).
\begin{definition}{\bf (DPPs and $k$-DPPs)}
\label{def:kdpp}
Given a dataset $X$ and the corresponding feature vectors $V \in \mathbb{R}^{m \times n}$, the DPP is a distribution over subsets $S \subseteq X$ such that the probability 
$\mathbb{P}[S] \propto \deter{V_SV_S^\top}.$
The induced probability distribution over $k$-sized subsets is called $k$-DPP.
\end{definition}

\noindent
 A characteristic of a DPP  measure is that the inclusion of one item makes including other similar items less likely.
Consequently, DPPs
assign greater probability to subsets of points that are diverse; for example, a DPP prefers search results that cover multiple aspects of a user's query, rather than the most popular  one.

\begin{figure*}
\centering
\begin{tikzpicture}
	\coordinate (1) at (-1,1);
	\coordinate (2) at (2,1.5);
	\coordinate (3) at (1,2);

	\draw[->,ultra thick,color=black] (0,0) -- (2);
	\draw[dashed, color=gray] (1) -- ($(2)+(1)$);
	\draw[->,ultra thick,color=black] (0,0) -- (3);
	\draw[thin, color=gray] (1) -- ($(3)+(1)$);
	
	\draw[thin, color=gray] (3) -- ($(2)+(3)$);
	\draw[thin, color=gray] ($(3)+(1)$) -- ($(2)+(3)+(1)$);
	\draw[thin, color=gray] (3) -- ($(3)+(1)$);
	\draw[thin, color=gray] ($(2)+(3)$) -- ($(2)+(3)+(1)$);
	\draw[dashed, color=gray] (2) -- ($(2)+(1)$);
	\draw[->, ultra thick,color=black] (0,0) -- (1);
	
	\draw[thin, color=gray] (2) -- ($(2)+(3)$);
	\draw[dashed, color=gray] ($(2)+(1)$) -- ($(2)+(3)+(1)$);

	\draw (0.4,1.7) node {$u_1$};
	\draw (1,0.2) node {$u_2$};
	\draw (-1.3,0.3) node {$u_3$};
    \node[above right] at (-0.1,4.5){\textbf{(A)}};
	
\end{tikzpicture}\hspace*{2pt}\begin{tikzpicture}
	\coordinate (1) at (-1,1);
	\draw[->,ultra thick, color=black] (0,0) -- (3,0);
	\draw[dashed,color=gray] (1) -- ($(3,0)+(1)$);
	\draw[->,ultra thick,color=black] (0,0) -- (0,3);
	\draw[ultra thin,color=gray] (1) -- ($(0,3)+(1)$);
	
	\draw[thin, color=gray] (0,3) -- (3,3);
	\draw[thin, color=gray] ($(0,3)+(1)$) -- ($(3,3)+(1)$);
	\draw[thin, color=gray] (0,3) -- ($(0,3)+(1)$);
	\draw[thin, color=gray] (3,3) -- ($(3,3)+(1)$);
	\draw[dashed, color=gray] (3,0) -- ($(3,0)+(1)$);
	\draw[->, ultra thick,color=black] (0,0) -- (1);
	
	\draw[thin, color=gray] (3,0) -- (3,3);
	\draw[dashed, color=gray] ($(3,0)+(1)$) -- ($(3,3)+(1)$);

	\draw (-0.3,1.7) node {$u_1$};
	\draw (2,0.2) node {$u_2$};
	\draw (-0.9,0.3) node {$u_3$};
	\node[above right] at (-0.1,4.5){\textbf{(B)}};
\end{tikzpicture}\hspace*{3pt}\begin{tikzpicture}
	\coordinate (1) at (2.5,0.2);
	\coordinate (2) at (3,0);
	\coordinate (3) at (-2,0.2);
	
	\draw[->,ultra thick,color=black] (0,0) -- (2);
	\draw[thin, color=gray] (1) -- ($(2)+(1)$);
	\draw[->,ultra thick,color=black] (0,0) -- (3);
	\draw[thin, color=gray] (1) -- ($(3)+(1)$);
	
	\draw[dashed, color=gray] (3) -- ($(2)+(3)$);
	\draw[thin, color=gray] ($(3)+(1)$) -- ($(2)+(3)+(1)$);
	\draw[thin, color=gray] (3) -- ($(3)+(1)$);
	\draw[dashed, color=gray] ($(2)+(3)$) -- ($(2)+(3)+(1)$);
	\draw[thin, color=gray] (2) -- ($(2)+(1)$);
	\draw[->, ultra thick,color=black] (0,0) -- (1);
	
	\draw[dashed, color=gray] (2) -- ($(2)+(3)$);
	\draw[thin, color=gray] ($(2)+(1)$) -- ($(2)+(3)+(1)$);

	\draw (0.2,0.4) node {$u_1$};
	\draw (1,-0.3) node {$u_2$};
	\draw (-1,-0.3) node {$u_3$};
	\node[above right] at (-0.1,4){\textbf{(C)}};
\end{tikzpicture}
\caption{\textmd{\small{ \textbf{(A)} depicts how diversity relates to the volume of the parallelepiped formed by the feature vectors: more the volume,  more the diversity.
All the vectors in \textbf{(B)} are pairwise orthogonal and their collection has a large determinant and, hence, the parallelepiped has a large volume.
The parallelepiped in \textbf{(C)}, has a low volume which tends to zero as the angle between $u_1, u_2$ decreases or between $u_2,u_3$ increases. For a matrix with these vectors as rows, the determinant will be small, since the orthogonal projection of $u_1$ on $u_2$ is very small, and similarly for $u_2,u_3$. If they become parallel, the determinant becomes zero since one row is then linearly dependent on another.\vspace{-.2in}}}}
\label{fig:volDet}
\end{figure*}

\subsection{Our Algorithmic Framework}
We are given a dataset $X$ along with corresponding feature vectors $V \in \mathbb{R}^{m \times n}$ and a positive number $k \leq m$ that denotes the size of the subset or summary that needs to be generated.
The dataset $X$ is partitioned into $p$ disjoint classes $X_1 \cup X_2\cup \cdots \cup X_p$, each corresponding to a sensitive class.
A key feature of our model is that we do not fix one notion of fairness; rather, we allow for the {\em specification} of  {\em fairness constraints} with respect to these sensitive classes.
This is to make the model flexible and widely applicable in the light of the observation that, in different contexts, fairness could mean very different things.
Formally, we do this by taking as input $p$ natural numbers $(k_1, k_2, \ldots, k_p)$ such that $\sum_{j=1}^p k_j=k$ is the sample size.
These numbers give rise to a {\em fair family of allowed subsets} defined to be
$\cB:=\{S\subseteq X: |S\cap X_j|=k_j \mbox{ for all }j=1,2, \ldots, p\}.$
The generality of our framework is evident:  by setting $(k_1,\ldots,k_p)$ appropriately, the user may ensure their desired notion of fairness depending on the context. 
To give some examples, if in the  dataset the number of the $i$-th sensitive attribute is $m_i$, then we can set $k_i:={km_i}/{m} $ to obtain proportional representation.
Similarly, equal representation can be implemented by setting $k_i={k}/{p}$ for all $i$.

The fair data summarization problem then becomes to sample from a  distribution that is supported on  $\cB$. 
However, there could be many distributions supported on $\cB$  and we pick  one that is ``closest'' to the to the $k$-DPP described by $V$. 
We use the Kullback-Leibler (KL) divergence between  distributions $q$ and $\tilde{q}$ defined as  
$D_{KL}(q||\tilde{q}) := \sum_{S} q_S \log \frac{q_S}{\tilde{q}_S}.$
The following lemma characterizes  the distribution supported on $\cB$ that has the least KL-divergence to a given distribution. The proof appears in Section \ref{proof-opt-kl}.
\begin{lemma}\label{lem:opt-kl-dist}
Given a distribution $\tilde{q}$ with support set $\mathcal{C}$, let $\mathcal{B} \subseteq \mathcal{C}$ and $q$ be any distribution on $\mathcal{B}$. Then the optimal value of 
$\min_q D_{KL}(q||\tilde{q})$
is achieved by the distribution $q^\star$, such that $q^\star_S \propto \tilde{q}_S$, for $S \in \mathcal{B}$ and $0$ otherwise.
\end{lemma}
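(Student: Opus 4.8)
The plan is to set up the minimization $\min_q D_{KL}(q\|\tilde q)$ as a convex optimization over the probability simplex supported on $\mathcal{B}$, and solve it by a direct argument. First I would restrict attention to distributions $q$ supported on $\mathcal{B}$: if $q$ puts mass on some $S \notin \mathcal{C}$, then $\tilde q_S = 0$, so the term $q_S \log(q_S/\tilde q_S)$ is $+\infty$ (or the divergence is undefined), hence any minimizer must have $\supp(q) \subseteq \mathcal{C}$; combined with the constraint $\supp(q)\subseteq\mathcal{B}$ and $\mathcal{B}\subseteq\mathcal{C}$, we only need to consider $q$ with $\supp(q)\subseteq\mathcal{B}$. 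Then I would write $Z := \sum_{S\in\mathcal{B}}\tilde q_S$ (the claim implicitly assumes $Z>0$, i.e.\ $\mathcal{B}$ is nonempty and $\tilde q$ assigns it positive mass) and define $q^\star_S := \tilde q_S / Z$ for $S\in\mathcal{B}$ and $0$ otherwise, which is exactly the claimed distribution.

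The key step is to show $D_{KL}(q\|\tilde q) \ge D_{KL}(q^\star\|\tilde q)$ for every $q$ supported on $\mathcal{B}$, with equality iff $q = q^\star$. I would do this by relating the objective to $D_{KL}(q\|q^\star)$. Concretely, for $q$ supported on $\mathcal{B}$,
\[
D_{KL}(q\|\tilde q) = \sum_{S\in\mathcal{B}} q_S \log\frac{q_S}{\tilde q_S}
= \sum_{S\in\mathcal{B}} q_S \log\frac{q_S}{q^\star_S} + \sum_{S\in\mathcal{B}} q_S \log\frac{q^\star_S}{\tilde q_S}.
\]
The first sum is $D_{KL}(q\|q^\star) \ge 0$ by Gibbs' inequality (nonnegativity of KL divergence), with equality iff $q=q^\star$. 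For the second sum, $q^\star_S/\tilde q_S = 1/Z$ is a constant independent of $S$, so it equals $\log(1/Z)\sum_{S\in\mathcal{B}} q_S = -\log Z$, which does not depend on $q$. Hence $D_{KL}(q\|\tilde q) = D_{KL}(q\|q^\star) - \log Z \ge -\log Z$, and the minimum $-\log Z$ is attained exactly at $q = q^\star$. This both identifies the minimizer and (as a byproduct) gives the optimal value $-\log\sum_{S\in\mathcal{B}}\tilde q_S$.

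I do not anticipate a serious obstacle; the only subtlety is the bookkeeping around supports and the $0\log 0 = 0$ / division-by-zero conventions, which is why the first reduction (forcing $\supp(q)\subseteq\mathcal{B}$) must come first so that every logarithm appearing is of a strictly positive quantity. An alternative would be a Lagrange-multiplier / KKT computation: minimize $\sum_S q_S\log(q_S/\tilde q_S)$ subject to $\sum_S q_S = 1$ and $q_S = 0$ for $S\notin\mathcal{B}$; stationarity gives $\log(q_S/\tilde q_S) + 1 = \lambda$, i.e.\ $q_S \propto \tilde q_S$ on $\mathcal{B}$, and convexity of the objective guarantees this critical point is the global minimum. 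The Gibbs-inequality argument above is cleaner and also yields the uniqueness of the optimizer immediately, so that is the route I would write up.
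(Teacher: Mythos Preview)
Your proposal is correct and follows essentially the same approach as the paper: both define the normalized restriction $q^\star$, decompose $D_{KL}(q\|\tilde q)$ as $D_{KL}(q\|q^\star)$ plus the constant $\log\alpha$ (equivalently $-\log Z$), and conclude via nonnegativity of KL divergence. Your write-up is slightly more careful about support conventions, but the argument is the same.
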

\noindent
Thus, the distribution above can be thought of as the most diverse while being fair;  we call it partition DPP, or $P$-DPP.
\begin{definition}{\bf ($P$-DPP)}
\label{def:pdpp}
Given a dataset $X$, the corresponding feature vectors $V \in \mathbb{R}^{m \times n}$,  a partition $X = X_{1} \cup X_{2} \cup \cdots \cup X_{p}$ into $p$ parts, and natural numbers $k_1,\ldots,k_p$,  $P$-DPP defines a distribution $q^\star$ over subsets $S \subseteq X$ of size $k = \sum_{i=1}^p k_i$ such that for all $S\in \cB$ we have
$q^\star_S:=\frac{\det(V_SV_S^\top)}{\sum_{T\in \cB} \det(V_TV_T^\top)},$
 and $q^\star_S = 0$ otherwise.
\end{definition}
\noindent
From the algorithmic perspective, the main problem we study is that of coming up with efficient algorithms to sample from $P$-DPPs.
The flexibility that our framework provides in specifying the fairness constraints comes at a computational cost -- coming up with algorithms to sample from $P$-DPPs. 
This is a significant challenge, especially given the results of \cite{CDKSV17} that show that sampling from $P$-DPPs is $\#$P-hard.

\section{Our Algorithm}

\subsection{Notions of Volume and Projection.}
Let us recall the interpretation of determinants in terms of volumes.
For $S \subseteq X$, $V_S$ is the set of vectors $\{v_x\}_{x \in S}$. 
If the vectors in $S$ are pairwise orthogonal, then the matrix $V_S V_S^\top$ is diagonal with entries $\{\norm{v_x}^2\}_{x \in S}$ on the diagonal and, hence, $\det(V_S V_S^\top)=\prod_{x\in S} \norm{v_x}^2$. 
In the general case, the determinant is not simply the (squared) product of the norms of vectors, however a similar formula still holds.
Let $H\subseteq \R^n$ be any linear subspace and $H^{\bot}$ be its orthogonal complement, i.e., 
$H^\bot := \{y \in \R^n \mid \inangle{x,y} = 0 \text{ for all } x \in H\}.$
Let $\Pi_H: \R^n \to \R^n$ be the orthogonal projection operator on the subspace $H^{\bot}$, i.e., whenever $w\in \R^n$ decomposes as $w_1+w_2$ for $w_1\in H$ and $w_2\in H^{\bot}$, then $\Pi_H(w)=w_2$. 
By a slight abuse of notation, we also denote by $\Pi_v$ the operator that projects a vector to another that is orthogonal to a given  vector $v\in \R^n$, i.e.,
$\Pi_v(w):=w - \inangle{w,v}/\norm{v}^2.$

The following lemma is a simple generalization of the formula derived above for orthogonal families of vectors and inspires our algorithm for $P$-DPPs. The proof of this lemma is presented in Section~\ref{proof:det_product}.
\begin{lemma}[Determinant Volume Lemma]
\label{lemma:det_product}
Let $w_1, \ldots, w_k \in \R^n$ be the rows of a matrix $W\in \R^{k \times n}$, then 
$\det(WW^\top)=\prod_{i=1}^k \norm{\Pi_{H_i}w_i}^2,$
where $H_i$ is the subspace spanned by  $\{w_1, \ldots, w_{i-1}\}$ for all $i=1,2, \ldots, k$.
\end{lemma}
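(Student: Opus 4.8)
The plan is to prove the identity by induction on $k$, peeling off the last row $w_k$ and using the multiplicativity of the determinant under a suitable change of coordinates. First I would handle the base case $k=1$: here $H_1 = \lspan\{\}= \{0\}$, so $\Pi_{H_1}w_1 = w_1$ and $\det(WW^\top) = \norm{w_1}^2 = \norm{\Pi_{H_1}w_1}^2$, as claimed. For the inductive step, I would decompose each $w_i$, for $i < k$, relative to the subspace $H_k = \lspan\{w_1,\dots,w_{k-1}\}$ in a way that isolates the component of $w_k$ orthogonal to $H_k$.

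The cleanest route is the following. Write $w_k = u + r$ where $u \in H_k$ and $r = \Pi_{H_k} w_k \in H_k^\bot$. Row operations that subtract multiples of $w_1,\dots,w_{k-1}$ from $w_k$ do not change $\det(WW^\top)$ (they correspond to left-multiplying $W$ by a unipotent matrix, which has determinant $1$, hence $WW^\top$ is replaced by $EWW^\top E^\top$ with $\det E = 1$). Applying such operations replaces $w_k$ by $r$. Now I would pick an orthonormal basis of $\R^n$ whose first $\dim H_k$ vectors span a subspace containing $w_1,\dots,w_{k-1}$ and such that $r$ is a coordinate vector orthogonal to all of them; in this basis $W$ has a block structure where the last row is $(0,\dots,0,\norm{r},0,\dots)$ and is orthogonal to every other row. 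Expanding $\det(WW^\top)$ along the last row and column of the Gram matrix then gives $\det(WW^\top) = \norm{r}^2 \cdot \det(W'W'^\top)$, where $W'$ has rows $w_1,\dots,w_{k-1}$. Applying the inductive hypothesis to $W'$ — noting that the subspaces $H_1,\dots,H_{k-1}$ are unchanged — completes the step, since $\norm{r}^2 = \norm{\Pi_{H_k}w_k}^2$.

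Alternatively, and perhaps more transparently, I would invoke the QR / Gram–Schmidt factorization directly: write $W^\top = QR$ with $Q \in \R^{n\times k}$ having orthonormal columns and $R \in \R^{k\times k}$ upper triangular, so that $WW^\top = R^\top Q^\top Q R = R^\top R$ and hence $\det(WW^\top) = \prod_{i=1}^k R_{ii}^2$. The content of Gram–Schmidt is precisely that $R_{ii} = \norm{\Pi_{H_i} w_i}$: the $i$-th Gram–Schmidt vector is $w_i$ minus its projection onto $\lspan\{w_1,\dots,w_{i-1}\} = H_i$, and its norm is the $i$-th diagonal entry of $R$. This bypasses the induction but relies on the standard QR existence theorem; one should be slightly careful about the rank-deficient case, where some $\Pi_{H_i}w_i = 0$, in which case both sides are zero and the identity still holds trivially (the rows are linearly dependent, so $\det(WW^\top) = 0$).

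The main obstacle is not conceptual but bookkeeping: making precise the claim that subtracting a vector in $H_k$ from $w_k$ leaves $\det(WW^\top)$ invariant, and that after doing so the modified last row is genuinely orthogonal to all the others, so that the Gram matrix acquires the desired block-diagonal (one-dimensional-block) structure for the cofactor expansion. I expect to spend most of the care there, and on stating the degenerate case cleanly; the rest is routine.
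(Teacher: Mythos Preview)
Your proposal is correct. Your first approach---induction on $k$, peel off $w_k$, use elementary row operations to reduce the Gram matrix to block form, then apply the hypothesis to $W'$---is essentially what the paper does. The only difference is where the row operations land: the paper performs them directly on the $k\times k$ Gram matrix $WW^\top$, tracking entry by entry that the sequence of transformations $(1) \leftarrow (1) - c_j\,(j)$ drives the off-diagonal part of the first row to zero and leaves $\norm{\Pi_{H_k}w_k}^2$ in the $(1,1)$ slot; you instead apply the operations to $W$ itself (left-multiplication by a unipotent $E$), which is cleaner because the orthogonality of $r = \Pi_{H_k}w_k$ to the remaining rows is immediate and the block structure of $(EW)(EW)^\top$ follows without any entry-chasing. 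Incidentally, your orthonormal-basis step is unnecessary: inner products are basis-independent, so once the last row is $r$ the Gram matrix already has the $1\times 1$ block $\norm{r}^2$ split off.

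Your second approach via the QR factorization $W^\top = QR$ is a genuinely different route that the paper does not take. It trades the explicit induction for an appeal to the existence of QR (equivalently, Gram--Schmidt), after which $\det(WW^\top) = \det(R^\top R) = \prod_i R_{ii}^2$ and the identification $R_{ii} = \norm{\Pi_{H_i}w_i}$ is exactly the content of Gram--Schmidt. This is shorter and arguably more conceptual; the paper's approach has the virtue of being self-contained and not invoking an external decomposition theorem. Your handling of the degenerate case (some $\Pi_{H_i}w_i = 0$, both sides vanish) is also a point the paper leaves implicit.
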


\subsection{Our Sample and Project Algorithm}\label{sec:sample_project}

Before we describe our algorithms for sampling from $P$-DPPs, it is instructive to consider the special case of $k$-DPPs itself and the simple ``orthogonal'' scenario -- where all the vectors $v_x$, for $x\in X$, are pairwise orthogonal. 
In such a case, there is a simple iterative algorithm: sample $x\in X$ with probability $\propto \norm{v_x}^2$, then add $x$ to $S$ and remove $x$ from $X$; repeat until $|S|=k$. 
It is intuitively clear, and not hard to prove, that the final probability of obtaining a given set $S$ as a sample is  proportional to $\prod_{x\in S} \norm{v_x}^2=\det(V_S V_S^\top)$ and, hence, recovers the $k$-DPP exactly.

In case of $P$-DPPs where all the vectors are pairwise orthogonal, and we need to sample $k_i$ vectors from partition $X_i$, we can sample the required number of elements from each partition independently using the procedure in the previous paragraph.
The orthogonality of the vectors and the disjointness of the parts implies that this sampling procedure gives the right probability distribution.

However, when the vectors $v_x$ are no longer pairwise orthogonal,  the above heuristic can fail miserably. 
This is where we invoke Lemma \ref{lemma:det_product}.
It suggests the following strategy: once we select a vector, then we should orthogonalize all the remaining vectors with respect to it before repeating the sampling procedure.
For the case of $k$-DPPs, it can be shown that this heuristic outputs a set $S$ with probability no more than $k!$ times its desired probability  \cite{DV06}.
The $k!$ term  is primarily because the $k$ vectors can be chosen in any of the $k!$ orders.
Taking this simple heuristic as a starting point and incorporating an additional idea to deal with partition constraints,  we arrive at our {\it Sample and Project} algorithm -- see Algorithm~\ref{alg:iter_pdpp}.

\begin{algorithm}
\caption{Approximate sampling algorithm for $P$-DPPs}\label{alg:iter_pdpp}
\begin{algorithmic}[1]
\Procedure{Sample-And-Project}{$V, (X_1,.., X_p), (k_1,.., k_p)$}
\State $S \gets \emptyset$
\State $k \gets k_1 + k_2 + \dots + k_p$
\State Let $w_x := v_x$ for all $x \in X$
\While{$|S| < k$}
\State Pick any $i \in \{1, \dots, p\}$ such that $|S \cap X_i| < k_i$
\State Define $q \in \mathbb{R}^{X_i}$ by $q_x := \norm{w_x}^2$ for $x \in X_i$
\State Sample $\tilde{x} \in X_i$ from distribution $\inbraces{\frac{q_x}{\sum_{y\in X_i}q_y}}_{x\in X_i}$
\State $S\gets S \cup \{\tilde{x}\}$
\State Let $v := w_{\tilde{x}}$
\State For all $x \in X$, set $w_x := \Pi_v (w_x)$
\EndWhile
\State \textbf{return} $S$
\EndProcedure
\end{algorithmic}
\end{algorithm}

Given that we have made several simplifications and informal ``jumps'' when deriving the algorithm one cannot expect that the distribution over sets $S$ produced by Algorithm~\ref{alg:iter_pdpp} to be exactly the same as $P$-DPP.
Later in this section we give evidence that in fact the distribution output by the ``Sample and Project'' heuristic can be formally related to the $P$-DPP distribution, and hence the constructed algorithm is provably an approximation to a $P$-DPP.
However, we first note an attractive feature of this algorithm -- it is fast and practical. For a $V \in \R^{m \times n}$ matrix and $k = \sum_{i=1}^p k_i$, Algorithm~\ref{alg:iter_pdpp} can be implemented in $O(mnk)$ time.

\noindent
Note that the size of the data for this problem is already $\Theta(mn)$, hence, the algorithm does only linear work per sampled point.
For $P$-DPPs there is only one known exact algorithm which samples in time $m^{O(p)}$, which is polynomial only when $p = O(1)$ \cite{CDKSV17}. 

Another possible approach for sampling from DPPs is the Markov Chain Monte Carlo method. 
It was proved in~\cite{AGR16} that Markov Chains can be used to sample from $k$-DPPs in time roughly $\wt{O}(mk^4 + mn^2)$ given a ``warm start'', i.e., a set $S_0$ of significant probability. 
This approach does not extend to $P$-DPPs -- indeed in~\cite{AGR16} the underlying probability distribution is required to be Strongly Rayleigh, a property which holds for $k$-DPPs, but fails for $P$-DPPs whenever the number of parts is at least two.
One can still formulate an analogous MCMC algorithm for the case of $P$-DPPs -- it fails on specially crafted ``bad instances'' but seems to perform well on real world data. However, even ignoring the lack of provable guarantees for this algorithm, it does not seem possible to reduce its running time below $O(mk^4+mn^2)$, which significantly limits its practical applicability.

\subsection{Provable Guarantees for Our Algorithm}

We now present a theorem which connects the output distribution of Algorithm~\ref{alg:iter_pdpp} to the corresponding $P$-DPP.
To establish such a guarantee we require the following assumption on the singular values of the matrices $V_{X_i}$. 

\begin{definition}[$\beta$-balance] \label{beta-balance}
Let $X$ be a set of $m$ elements partitioned into $p$ parts $X_1, \dots, X_p$ and let $V \in \mathbb{R}^{m \times n}$ be a matrix. Denote by $\sigma_1 \geq \cdots \geq \sigma_n$ the singular values of $V$ and for each $i\in \{1,2, \ldots, p\}$, let $\sigma_{i,1} \geq \cdots \geq \sigma_{i,n}$ denote the singular values of $V_{X_i}$. For $\beta \geq 1$, the partition $X_1, \ldots, X_p$ is called {$\beta$-balanced} with respect to $V$ if for all $i \in \{1, \dots, p\}$ and for all $j \in \{1, \dots, n\}$,
$\sigma_{i,j} \geq \frac{1}{\beta} \sigma_j.$

\end{definition}

\noindent 
The $\beta$-balance property  informally requires that the diversity within each of the partitions $V_{X_i}$, relative to $V$, is significant. 
 A more concrete geometric way to think about this condition is as follows: if one thinks of the positive semidefinite matrix $V^\top V\in \R^{n \times n}$ as representing an ellipsoid in $\R^n$ whose axes are the singular values, then the $\beta$-balance condition essentially says that the ellipsoids corresponding to each of the partitions are a $\beta$-approximation to that of $V$.

One can construct simple examples that motivate the necessity of such a condition.\footnote{Consider an example with $p=2$ parts and $m=3n$ vectors of dimension $2n$, where the first part contains vectors $e_1, e_2, \ldots, e_{2n}$ (where $e_i$ denotes the $i$th standard basis vector) and the second part consists of $e_1, e_2, \ldots, e_n$. Such a partition is not $\beta$-balanced for any $\beta>0$ since $V$ has $2n$ non-zero singular values and $V_{X_2}$ has only $n$ of them ($V_{X_1}$ has $2n$ of them). The Sample and Project algorithm indeed fails to approximate the $P$-DPP, as it outputs a set with non-zero determinant with exponentially small probability.}
For a positive and negative example of $\beta$-balanced property, see Figure \ref{fig:betaFigure}.

\begin{figure}[t]
\centering
\begin{tikzpicture}
	\coordinate (1) at (-1,2);
	\coordinate (2) at (2,2);
	\coordinate (3) at (2,0);

	\draw[->,color=white] (0,-1.2) -- (2,-1.2);

	\draw[->,ultra thick,color=red] (0,0) -- (2);
	\draw[->,ultra thick,color=blue] (0,0) -- (3);
	\draw[->, ultra thick,color=blue] (0,0) -- (1);
	
	\draw (1,1.7) node {$w_1$};
	\draw (2,0.3) node {$w_2$};
	\draw (-0.3,1.5) node {$w_3$};
	
    \draw[->,thin, color=gray] (0,0) -- (2.5,0);
	\draw (2.5,-0.2) node {$x$};
	\draw[->,thin, color=gray] (0,0) -- (0,3);
	\draw (0.3,3) node {$y$};
    \node[above right] at (1,3){\textbf{(A)}};

\end{tikzpicture}
\hspace{2cm}
\begin{tikzpicture}
	\coordinate (1) at (-1.5,1.5);
	\coordinate (2) at (2,2);
	\coordinate (3) at (1,-1);
	\coordinate (4) at (-1,2);
	\coordinate (5) at (2,0);

	\draw[->,ultra thick,color=red] (0,0) -- (2);
	\draw[->,color=blue] (0,0) -- (3);
	\draw[->, color=blue] (0,0) -- (1);
	\draw[->,  ultra thick,color=blue] (0,0) -- (4);
	\draw[->, ultra thick,color=blue] (0,0) -- (5);
	\draw[->,  densely dotted, color=blue] (4) -- (1);
	\draw[->, densely dotted, color=blue] (5) -- (3);

	\draw (1,1.7) node {$w_1$};
	\draw (2,0.3) node {$w_2$};
	\draw (-0.3,1.5) node {$w_3$};
	\draw (-1.25,0.4) node {$\Pi_{w_1}(w_3)$};
	\draw (0,-0.85) node {$\Pi_{w_1}(w_2)$};
	
    \draw[->,thin, color=gray] (0,0) -- (2.5,0);
	\draw (2.5,-0.2) node {$x$};
	\draw[->,thin, color=gray] (0,0) -- (0,3);
	\draw (0.3,3) node {$y$};
    \node[above right] at (1,3){\textbf{(B)}};

\end{tikzpicture}
\caption{\textmd{\small{This figure represents an iteration of the algorithm for input $X = \{1,2,3\}$, $V_{X_1} = \{w_1\}$ (red) and $V_{X_2} = \{w_2,w_3\}$ (blue). If the algorithm selects the partition $X_1$ and samples the vector $w_1$, it removes the projection of $w_1$ from $w_2$ and $w_3$ to obtain $\Pi_{w_1}(w_2)$ and $ \Pi_{w_1}(w_3)$.
 }}}
\label{fig:algoEx}
\end{figure}

Importantly, Algorithm~\ref{alg:iter_pdpp} never outputs a set $S\notin \cB$, hence the only way its output distribution could significantly differ from the $P$-DPP would be if certain sets $S\in \cB$ appeared  in the output with larger probabilities than specified by the $P$-DPP. 
Our main theoretical result for {\it Sample and Project} is that for $\beta$-balanced instances we can control the scale at which such a violation can happen.
\begin{theorem}[Approximation Guarantee]\label{thm:approx_guarantee}
Let $X$ be a set of $m$ elements partitioned into $p$ parts $X_1, \dots, X_p$, a matrix $V \in \mathbb{R}^{m \times n}$ and integers $k_1, \dots, k_p$, such that $X_1, \dots, X_p$ is a $\beta$-balanced  partition with respect to $V$ and $\sum_{j=1}^p k_j$. Let $\mathcal{B}\subseteq 2^X$ denote the following family of sets
\[\mathcal{B} := \{S \subseteq X : |S \cap X_j| = k_j \mbox{ for all }j=1,2, \ldots, p\}\]
Then Algorithm 1, with $V$, $(X_1, \dots, X_p)$ and $(k_1, \dots, k_p)$ as input, returns a subset $S \in \mathcal{B}$ with probability
$\tilde{q}(S) \leq \eta_k \cdot \beta^{2k} \cdot q^\star_S$
where $q^\star_S = \frac{\det(V_SV_S^\top)}{\sum_{T \in \mathcal{B}} \det(V_{T}V_{T}^\top)}$, $k = \sum_{j=1}^p k_j$ and $\eta_k = k_1! \cdot k_2! \cdots k_p!$.
\end{theorem}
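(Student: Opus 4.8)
The plan is to track the probability that the algorithm outputs a fixed set $S = \{x_1, \dots, x_k\} \in \mathcal{B}$ by summing over all the orderings in which its elements can be drawn, and then to compare this sum to $q^\star_S$ term by term using the Determinant Volume Lemma. First I would fix $S \in \mathcal{B}$ and note that the only way the algorithm can terminate with $S$ is by sampling the elements of $S$ (in some order) and never sampling anything outside $S$. Crucially, the choice of which part $i$ to work on at each step is made by the algorithm (not randomly), but it is constrained: at every step the current counts $|S' \cap X_i|$ must not exceed $k_i$, and at the end they must equal $k_i$. So the orderings $\pi$ of $S$ that are "realizable" are exactly those compatible with some legal sequence of part-choices; I would argue that for each such realizable ordering, the part-choice sequence is actually forced by $\pi$ itself (the part of the $t$-th element determines which part was being processed at step $t$), so we can simply sum over orderings $\pi$ of $S$ of the form: an interleaving of a fixed ordering of $S \cap X_1$, one of $S\cap X_2$, etc. The number of such interleavings is what ultimately produces a factor related to $\eta_k = \prod_j k_j!$ — more precisely, for each of the $\binom{k}{k_1, \dots, k_p}$ interleaving patterns and each choice of internal orderings within the parts (there are $\prod_j k_j!$ of those), we get one ordering, but I will want to be careful and instead bound the number of orderings directly by $\eta_k \cdot (\text{number of interleavings})$ and show the interleavings are absorbed elsewhere, or—cleaner—show the whole sum is at most $\eta_k$ times a single "canonical order" contribution inflated by $\beta^{2k}$.

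The heart of the argument is the per-ordering probability computation. Fix a realizable ordering $\pi = (x_1, \dots, x_k)$. When the algorithm is about to sample $x_t$, all surviving vectors have been projected orthogonal to $w_{x_1}, \dots, w_{x_{t-1}}$ (the already-chosen vectors, in their then-current form), so the current vector associated to $x_t$ has squared norm $\norm{\Pi_{H_t} v_{x_t}}^2$ where $H_t = \spn\{v_{x_1}, \dots, v_{x_{t-1}}\}$; and the probability of picking $x_t$ at that step is $\norm{\Pi_{H_t} v_{x_t}}^2 / Z_t$, where $Z_t = \sum_{y \in X_{i(t)}, y \notin S} \norm{\Pi_{H_t} v_y}^2 + \sum_{y \in S \cap X_{i(t)}, \text{ not yet chosen}} \norm{\Pi_{H_t} v_y}^2$ is the normalization over the currently-active part. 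Multiplying over $t = 1, \dots, k$ and invoking Lemma~\ref{lemma:det_product} on the rows of $V_S$ in order $\pi$, the numerators telescope into exactly $\det(V_S V_S^\top)$. So
\[
\Pr[\text{output } S \text{ via ordering } \pi] = \frac{\det(V_S V_S^\top)}{\prod_{t=1}^{k} Z_t}.
\]
The remaining work is to lower-bound each $Z_t$: since $Z_t$ includes at least all the active vectors of part $X_{i(t)}$ (including, if $S$ requires more from that part, at least one not-yet-chosen element of $S$, but I only need a lower bound so I can just keep the non-$S$ elements), I can bound $Z_t \ge \sum_{y \in X_{i(t)}} \norm{\Pi_{H_t} v_y}^2 - (\text{the finitely many already-removed ones})$; a cleaner route is to note $\sum_{y \in X_{i(t)}} \norm{\Pi_{H_t} v_y}^2 \ge \sum_{j > t-1}^{?}$... concretely, $\Pi_{H_t}$ is projection onto an $(n - \dim H_t)$-dimensional space and $\sum_{y \in X_i} v_y v_y^\top = V_{X_i}^\top V_{X_i}$ has eigenvalues $\sigma_{i,j}^2$, so $\sum_{y \in X_i} \norm{\Pi_{H_t} v_y}^2 = \tr(\Pi_{H_t} V_{X_i}^\top V_{X_i} \Pi_{H_t}) \ge$ the sum of the $(n-\dim H_t)$ smallest eigenvalues of $V_{X_i}^\top V_{X_i}$, i.e. $\ge \sigma_{i,n}^2 + \dots \ge \frac{1}{\beta^2}(\sigma_{?}^2 + \dots)$ using $\beta$-balance. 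Matching this against the corresponding normalization constant that appears when one expands $\sum_{T \in \mathcal{B}} \det(V_T V_T^\top)$ (again via Lemma~\ref{lemma:det_product}, which shows that sum equals, up to the ordering overcount $\eta_k$, a nested sum of the $\norm{\Pi v}^2$ quantities over valid choices) is what yields the $\beta^{2k}$ factor: each of the $k$ steps loses at most a $\beta^2$.

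Finally, I would assemble the pieces: summing the per-ordering probability over all realizable $\pi$ gives $\tilde q(S) = \det(V_S V_S^\top) \cdot \sum_\pi \prod_t Z_t^{-1}$; bounding each $\prod_t Z_t \ge \beta^{-2k}\prod_t \hat Z_t$ where $\hat Z_t$ are the "ideal" normalizations, and recognizing that $\sum_{\pi} \prod_t \hat Z_t^{-1}$ relates to $\eta_k / \sum_{T \in \mathcal{B}} \det(V_T V_T^\top)$, delivers $\tilde q(S) \le \eta_k \beta^{2k} q^\star_S$. The main obstacle I anticipate is the bookkeeping in the last step — carefully matching the multiset of $\norm{\Pi v}^2$ terms produced by the algorithm's run against the terms in the expansion of $\sum_{T\in\mathcal B}\det(V_TV_T^\top)$, making sure the combinatorial overcount is exactly (or at most) $\eta_k = \prod_j k_j!$ and not something larger, and making sure the $\beta$-balance inequality is applied to the right pair of eigenvalue-sums at each step (the projection dimension $\dim H_t$ grows by one each step, so one must check the singular-value indices line up so that only a $\beta^2$, not a $\beta^{2t}$, is lost per step). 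The determinant-telescoping and the single-sample-step probability are routine given Lemma~\ref{lemma:det_product}; everything hard is concentrated in controlling the normalizers $Z_t$.
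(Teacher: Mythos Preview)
Your overall architecture matches the paper: fix $S$, compute the probability of each ordered output $\tau$, telescope the numerators to $\det(V_S V_S^\top)$ via the Determinant Volume Lemma, lower-bound the product of normalizers using $\beta$-balance, and sum over orderings. One small correction on the orderings: you are overcomplicating. The algorithm's sequence of part-choices is deterministic (line~6 is a fixed rule, not a random one), so there is exactly one interleaving pattern; the realizable orderings of $S$ are precisely the $\prod_j k_j! = \eta_k$ permutations that respect that fixed pattern, with no multinomial factor and nothing to ``absorb elsewhere''.

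The real gap is in how you pass from the per-step normalizers to $\sum_{T \in \mathcal{B}} \det(V_T V_T^\top)$. After the low-rank-approximation bound and $\beta$-balance you correctly get $Z_t \ge \beta^{-2}\sum_{j \ge t} \sigma_j^2$ (tail sums of the squared singular values of the \emph{full} $V$), hence $\prod_t Z_t \ge \beta^{-2k}\prod_{t=1}^{k}\sum_{j \ge t}\sigma_j^2$. You then propose to compare this to $\sum_{T \in \mathcal{B}} \det(V_T V_T^\top)$ by expanding that sum with the volume lemma into a ``nested sum of the $\|\Pi v\|^2$ quantities''. But that expansion does not factor across steps --- each inner sum depends on the earlier choices through the subspace $H_t$ --- so there is no per-step ``ideal normalizer'' $\hat Z_t$ to match against, and your assertion that ``$\sum_\pi \prod_t \hat Z_t^{-1}$ relates to $\eta_k / \sum_{T \in \mathcal{B}}\det$'' has no justification. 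The paper closes this step by a completely different, purely spectral route: observe $\prod_{t=1}^{k}\sum_{j \ge t}\sigma_j^2 \ge \sum_{t_1 < \cdots < t_k}\sigma_{t_1}^2\cdots\sigma_{t_k}^2$ (expand the product and keep only distinct-index terms), invoke the identity $\sum_{t_1<\cdots<t_k}\sigma_{t_1}^2\cdots\sigma_{t_k}^2 = \sum_{|T|=k}\det(V_T V_T^\top)$ of Lemma~\ref{lemma:singularValues}, and then trivially drop from all $k$-subsets to $\mathcal{B}$. This identity is the ingredient your plan is missing.
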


\noindent 
The proof of the approximation guarantee uses techniques inspired by \cite{DV06} who prove a similar bound for $k$-DPP sampling.

We  use the following lemmas in the proof of the theorem. The proof of these lemmas appear in Section~\ref{proof:singularValues} and Section~\ref{proof:nondegeneracy}.
\begin{lemma}\label{lemma:singularValues}
For any matrix $V \in \mathbb{R}^{m \times n}$ with $m\geq n \geq k$, 
\[\sum_{i_1 < i_2< \cdots < i_{k}} \sigma_{i_1}^2 \sigma_{i_2}^2 \cdots \sigma_{i_{k}}^2 = \sum\limits_{S:|S| = k} \det(V_SV_S^\top)\]
where $\sigma_1, \sigma_2, \dots, \sigma_n$ are the singular values of $V$ and $V_S$ is the sub-matrix of $V$ with rows  corresponding to $S$.
\end{lemma}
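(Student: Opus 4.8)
The statement to prove is Lemma~\ref{lemma:singularValues}: that the sum of products of squared singular values over all $k$-subsets of indices equals the sum of principal $k\times k$ minors $\det(V_S V_S^\top)$ over all $k$-subsets $S$ of rows. Both sides are, up to sign conventions, the $k$-th elementary symmetric polynomial evaluated on the eigenvalues of $VV^\top$ (equivalently $V^\top V$), so the plan is to recognize both quantities as the coefficient of $\lambda^{n-k}$ (or $\lambda^{m-k}$) in a characteristic polynomial and invoke the Cauchy--Binet formula.

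Here is the plan in steps. First I would recall that the nonzero eigenvalues of $VV^\top \in \R^{m\times m}$ and of $V^\top V \in \R^{n\times n}$ coincide and equal $\sigma_1^2, \dots, \sigma_n^2$ (padding with zeros as needed). Hence the left-hand side $\sum_{i_1 < \cdots < i_k} \sigma_{i_1}^2 \cdots \sigma_{i_k}^2$ is exactly $e_k(\sigma_1^2, \dots, \sigma_n^2)$, the $k$-th elementary symmetric polynomial, which is a standard coefficient in the characteristic polynomial: $e_k$ of the eigenvalues of a symmetric matrix $M$ equals the sum of all $k\times k$ principal minors of $M$. Applying this to $M = VV^\top$, we get $e_k(\sigma_1^2,\dots,\sigma_n^2) = \sum_{S \subseteq [m], |S| = k} \det((VV^\top)_{S,S})$, where $(VV^\top)_{S,S}$ is the principal submatrix indexed by $S$.

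The second step is to observe that $(VV^\top)_{S,S} = V_S V_S^\top$, since deleting all rows outside $S$ and all columns outside $S$ from $VV^\top = V V^\top$ is the same as forming $V_S (V_S)^\top$ (the $(x,y)$ entry of $VV^\top$ for $x,y\in S$ is $\langle v_x, v_y\rangle$, which is precisely the $(x,y)$ entry of $V_S V_S^\top$). This immediately yields $\sum_{|S|=k} \det(V_S V_S^\top)$ on the right, completing the identity. Alternatively, and perhaps cleaner to present self-containedly, I would apply the Cauchy--Binet formula directly to $\det(V_S V_S^\top)$ for each fixed $S$: $\det(V_S V_S^\top) = \sum_{T \subseteq [n], |T| = k} \det((V_S)_{\cdot, T})^2$, then sum over all $|S| = k$ and swap the order of summation to get $\sum_{|T| = k} \sum_{|S| = k} \det(V_{S,T})^2$, and recognize the inner-plus-outer double sum as $\sum_{|T|=k} \det((V^\top V)_{T,T})$ by Cauchy--Binet again, which equals $e_k(\sigma_1^2,\dots,\sigma_n^2)$.

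I do not expect a genuine obstacle here; the only things to be careful about are bookkeeping of which index set ($[m]$ versus $[n]$) the minors range over and making sure the degenerate cases (repeated singular values, $\sigma_j = 0$, or $S$ such that $V_S$ is rank-deficient so $\det(V_S V_S^\top) = 0$) are handled correctly — but since all identities are polynomial identities in the entries of $V$, they hold on a dense set and hence everywhere by continuity, so no special-casing is actually needed. The mild "hard part," if any, is simply deciding whether to cite Cauchy--Binet and the characteristic-polynomial/principal-minor identity as known facts or to include a one-line derivation; I would cite them, since both are completely standard, and keep the proof to a short paragraph.
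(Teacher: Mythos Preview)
Your proposal is correct and follows essentially the same approach as the paper: both identify each side of the identity with the $k$-th elementary symmetric polynomial of the eigenvalues of $VV^\top$, via the characteristic-polynomial coefficient that equals the sum of principal $k\times k$ minors, and then observe $(VV^\top)_{S,S}=V_S V_S^\top$. Your alternative Cauchy--Binet route is a pleasant extra but not needed.
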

\noindent 
\begin{lemma} \label{lemma:nondegeneracy}
Given a $\beta$-balanced partition, Algorithm~\ref{alg:iter_pdpp} returns a set $S$ such that $\det(V_SV_S^\top)$ is non-zero with probability one.
\end{lemma}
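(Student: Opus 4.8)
The plan is to show that, along every run, the $k$ feature vectors selected by Algorithm~\ref{alg:iter_pdpp} are linearly independent; by the Determinant Volume Lemma~\ref{lemma:det_product} this is exactly the statement $\det(V_SV_S^\top)\neq 0$. The crucial observation is that the sampling step of the algorithm never selects a vector whose current projection is the zero vector, so the only way a degenerate set could be produced is if, at some step, \emph{every} still-available vector of the currently chosen part has already been projected to zero; $\beta$-balance is what rules this out, via a dimension count.

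First I would record the standard Gram--Schmidt fact that after $t$ selections the vector $w_x$ maintained by the algorithm equals $\Pi_{H_t}v_x$, where $H_t:=\spn\{v_{s_1},\dots,v_{s_t}\}$ is the span of the feature vectors of the already-selected elements $s_1,\dots,s_t$ (iteratively projecting onto the orthogonal complements of the already-projected selected vectors yields the projection onto the orthogonal complement of their span). Then I would argue by induction on the step number $j$ that $\dim H_{j-1}=j-1$. At step $j$ the algorithm picks a part $X_{i_j}$ and draws $s_j\in X_{i_j}$ with probability proportional to $\|w_x\|^2=\|\Pi_{H_{j-1}}v_x\|^2$. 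As long as the normalizer $\sum_{x\in X_{i_j}}\|\Pi_{H_{j-1}}v_x\|^2$ is strictly positive, this draw is well defined and gives positive mass only to vectors with $\Pi_{H_{j-1}}v_x\neq 0$, so with probability one $v_{s_j}\notin H_{j-1}$, whence $\{v_{s_1},\dots,v_{s_j}\}$ remains linearly independent and $\dim H_j=j$. It therefore suffices to verify that at every step $\spn\{v_x:x\in X_{i_j}\}\not\subseteq H_{j-1}$, equivalently $\rank(V_{X_{i_j}})>\dim H_{j-1}=j-1$.

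Here $\beta$-balance is exactly what is needed. The claim is non-vacuous only when $\rank(V)\ge k$, i.e.\ $\sigma_k>0$ — otherwise every $k$-subset of $X$, and in particular every $S\in\mathcal{B}$, has $\det(V_SV_S^\top)=0$ (and $q^\star$ is undefined), so this is the standing assumption. Then $\beta$-balance gives $\sigma_{i,k}\ge\sigma_k/\beta>0$ for every part $i$, hence $\rank(V_{X_i})\ge k$ for all $i$; since $j-1\le k-1<k\le\rank(V_{X_{i_j}})=\dim\spn\{v_x:x\in X_{i_j}\}$, that subspace cannot be contained in the $(j-1)$-dimensional space $H_{j-1}$, so the normalizer at step $j$ is positive. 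Running the induction through all $k$ steps then shows the algorithm selects $k$ linearly independent feature vectors with probability one, i.e.\ $\det(V_SV_S^\top)>0$ almost surely. I do not expect a hard step here: the proof is a short induction, and the one conceptual point is that $\beta$-balance guarantees each part always ``sticks out'' of the current span, whose dimension is at most $k-1$; the only subtlety is remembering to exclude the trivially degenerate regime $\rank(V)<k$.
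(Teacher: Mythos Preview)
Your proposal is correct and follows essentially the same argument as the paper: both use $\beta$-balance together with the standing assumption $\rank(V)\ge k$ to conclude $\rank(V_{X_i})\ge k$ for every part, and then observe that since the span of the already-selected vectors has dimension at most $j-1<k$, some vector in the current part must have nonzero projection, so the sampling step is always well defined and never picks a dependent vector. Your write-up is slightly more explicit about the induction and the positivity of the normalizer, but the underlying idea is identical.
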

\noindent
We use also the following low rank approximation lemma in the proof of Theorem~\ref{thm:approx_guarantee}. 
\begin{lemma}[Low Rank Approximation, see e.g. \cite{golub2012matrix}]\label{fact:lowRankApprox}
For a matrix $A \in \R^{m \times n}$, with $m\geq n$, let $A = \sum_{j=1}^m \sigma_j u_j z_j^\top$ be its singular value decomposition. Then  
$A' = \sum_{j=1}^k \sigma_ju_jz_j^\top$
is the best rank $k$ approximation of $A$, i.e., 
$$\min\limits_{\substack{B : \text{ rank}(B)=k}}\norm{A-B}^2_F$$ is achieved for $B = A'$ and attains the value $\sum_{j=k+1}^n \sigma_j^2$.
\end{lemma}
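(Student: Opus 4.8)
The statement is the classical Eckart--Young--Mirsky theorem, so my plan is to recall the standard argument rather than invent a new one. The cleanest route, I think, is to reduce the minimization to a statement about orthogonal projections and then invoke the Ky Fan maximum principle. First I would parametrize low-rank matrices by their row space: if $B\in\R^{m\times n}$ has rank at most $k$, let $R=\mathrm{rowspace}(B)\subseteq\R^n$, so $\dim R\le k$, and let $\Pi_R$ denote the orthogonal projection of $\R^n$ onto $R$. Writing $\norm{A-B}_F^2=\sum_{i=1}^m\norm{a_i-b_i}^2$ over the rows $a_i$ of $A$ and $b_i$ of $B$, and noting that each $b_i$ ranges freely over $R$, the row-wise optimum is attained at $b_i=a_i\Pi_R$, i.e. at $B=A\Pi_R$, which still has rank $\le k$. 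Since the rows of $A-A\Pi_R=A(I-\Pi_R)$ lie in $R^\perp$, orthogonality of $\Pi_R$ and $I-\Pi_R$ gives $\norm{A-A\Pi_R}_F^2=\norm{A}_F^2-\norm{A\Pi_R}_F^2$. Hence $\min_{\mathrm{rank}(B)\le k}\norm{A-B}_F^2=\norm{A}_F^2-\max_{\dim R\le k}\norm{A\Pi_R}_F^2$.

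Next I would evaluate that maximum. Using $\norm{A\Pi_R}_F^2=\mathrm{tr}(\Pi_R A^\top A\,\Pi_R)=\mathrm{tr}(A^\top A\,\Pi_R)$ (symmetric idempotency of $\Pi_R$ and cyclicity of the trace) and writing $\Pi_R=\sum_{i=1}^k r_i r_i^\top$ for an orthonormal basis $r_1,\dots,r_k$ of $R$ (padding with zero vectors if $\dim R<k$), this equals $\sum_{i=1}^k r_i^\top(A^\top A)r_i$. The Ky Fan maximum principle --- a short consequence of the Courant--Fischer min--max theorem applied to the symmetric PSD matrix $A^\top A$, whose eigenvalues are $\sigma_1^2\ge\cdots\ge\sigma_n^2$ with eigenvectors $z_1,\dots,z_n$ --- states that the maximum of $\sum_{i=1}^k r_i^\top M r_i$ over orthonormal $k$-tuples equals $\lambda_1(M)+\cdots+\lambda_k(M)$. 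Here this gives $\max_{\dim R\le k}\norm{A\Pi_R}_F^2=\sum_{i=1}^k\sigma_i^2$, attained at $R=\mathrm{span}(z_1,\dots,z_k)$. Combining with the previous step, $\min_{\mathrm{rank}(B)\le k}\norm{A-B}_F^2=\norm{A}_F^2-\sum_{i=1}^k\sigma_i^2=\sum_{j=k+1}^n\sigma_j^2$. Finally, for that optimal $R$ one has $A\Pi_R=\sum_{j=1}^k\sigma_j u_j z_j^\top=A'$, and $A-A'=\sum_{j=k+1}^n\sigma_j u_j z_j^\top$ satisfies $\norm{A-A'}_F^2=\sum_{j=k+1}^n\sigma_j^2$ by orthonormality of the $u_j$ and $z_j$, so $A'$ indeed attains the minimum.

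An alternative I would mention as a remark proceeds via singular-value perturbation: using the easy direction of the Courant--Fischer characterization of singular values together with a dimension-counting intersection argument, one shows $\sigma_i(A-B)\ge\sigma_{i+k}(A)$ for every $B$ of rank at most $k$ (this is essentially Weyl's inequality), and then $\norm{A-B}_F^2=\sum_i\sigma_i(A-B)^2\ge\sum_{j>k}\sigma_j(A)^2$, with equality at $B=A'$ checked as above. In either argument the only non-routine ingredient is the relevant extremal principle (Ky Fan, respectively Weyl), and that is where I would put the care; everything else is bookkeeping with the Frobenius norm and the SVD. I do not expect a genuine obstacle here since this is a textbook result --- the write-up mainly needs to fix the harmless index typo (the SVD sum should run to $n$, not $m$) and to be explicit about the padding when $\dim R<k$.
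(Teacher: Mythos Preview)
Your proof is correct. Note, however, that the paper does not actually prove this lemma: it is stated as a known fact with a citation to Golub and Van Loan's \emph{Matrix Computations}, and is then invoked as a black box inside the proof of Theorem~3.4. So there is no ``paper's own proof'' to compare against; you have simply supplied a standard textbook argument where the authors chose to cite one.

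Both routes you outline --- the row-space parametrization followed by Ky Fan, and the Weyl/Courant--Fischer singular-value perturbation argument --- are the canonical proofs one finds in the literature, and either would be entirely appropriate here. Your observation about the index typo (the SVD sum in the statement should run to $n$, not $m$, since $m\ge n$) is correct and worth noting. One very minor point: the lemma as stated in the paper minimizes over $B$ with $\mathrm{rank}(B)=k$ exactly, whereas your argument (and the standard theorem) handles $\mathrm{rank}(B)\le k$; since the optimum $A'$ has rank exactly $k$ whenever $\sigma_k>0$, and the paper only applies the lemma in a setting where the relevant singular values are positive, this discrepancy is harmless.
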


\begin{figure}
\centering
\begin{tikzpicture}
  \draw[thin,white!100] (0,0) grid (3,3);
  \draw[<->] (0,0)--(3,0) node[right]{$x$};
  \draw[<->] (0,0)--(0,3) node(yaxis)[above]{$y$};
  \draw[line width=1pt,blue,-stealth](0,0)--(2,0) node[anchor=north east]{$\boldsymbol{v_1}$};
  \draw[line width=1pt,blue,-stealth](0,0)--(2,0.3) node[anchor=south east]{$\boldsymbol{v_2}$};
  \draw[line width=1pt,red,-stealth](0,0)--(0,2) node[anchor=north east]{$\boldsymbol{v_3}$};
  \draw[line width=1pt,red,-stealth](0,0)--(0.3,2) node[anchor=south west]{$\boldsymbol{v_4}$};
  \node[above right] at (yaxis.south east){\textbf{(A)}};
\end{tikzpicture}
\hspace{2cm}
\begin{tikzpicture}
  \draw[thin,white!100] (0,0) grid (3,3);
  \draw[<->] (0,0)--(3,0) node[right]{$x$};
  \draw[<->] (0,0)--(0,3) node(yaxis)[above]{$y$};
  \draw[line width=1pt,blue,-stealth](0,0)--(2,0) node[anchor=north east]{$\boldsymbol{v_1}$};
  \draw[line width=1pt,blue,-stealth](0,0)--(2,3) node[anchor=south east]{$\boldsymbol{v_2}$};
  \draw[line width=1pt,red,-stealth](0,0)--(0,2) node[anchor=north east]{$\boldsymbol{v_3}$};
  \draw[line width=1pt,red,-stealth](0,0)--(3,2) node[anchor=south west]{$\boldsymbol{v_4}$};
    \node[above right] at (yaxis.south east){\textbf{(B)}};
\end{tikzpicture}
\caption{\textmd{\small{Suppose matrix $V$ has vectors $v_1, v_2, v_3, v_4$ as rows, and partitions $V_{X_1}$ contains $v_1,v_2$ and  $V_{X_2}$ contains $v_3, v_4$. 
\textbf{Negative Example (A)} :  For $v_1 = (2,0), v_2 = (2, \varepsilon),$ $v_3 = (0,2), v_4 = (\varepsilon, 2)$, as $\varepsilon$ goes to zero, both non-zero singular values of $V$ approach $2\sqrt{2}$. However for both $V_{X_1}$ and $V_{X_2}$, the smallest singular value approaches $0$ as $\varepsilon$ decreases. 
\textbf{Positive Example (B)} :  For $v_1 = (2,0), v_2 = (2, 3),$ $v_3 = (0,2), v_4 = (3, 2)$, the singular values of $V$ are $5.38$ and $2.23$. The singular values of both $V_{X_1}$ and $V_{X_2}$ are $3.81$ and $1.57$, which is more than half of the corresponding singular values of $V$. Therefore $X_1, X_2$ is $\beta$-balanced for $\beta=2$.}}
\vspace{-.2in}} \label{fig:betaFigure}
\end{figure}

\begin{proof}[Proof of Theorem~\ref{thm:approx_guarantee}]
Let $\pi$ be the random variable representing the ordered output of the algorithm. Suppose that the algorithm outputs the set $S = \{x_1, \dots, x_k\}$. Since the partition $X_1, \dots, X_p$ is $\beta$-balanced with respect to $V$, by Lemma \ref{lemma:nondegeneracy} the algorithm will always output a set which has non-zero determinant value, i.e, $\det(V_SV_S^\top)\neq 0$.
Consider any ordering of the set $S$, say, $\tau := (x_1, \dots, x_k)$. Let $H_j\subseteq \R^n$ denote the linear subspace spanned by the vectors corresponding to the first $j-1$ elements, i.e., $\{v_{x_1}, \dots, v_{x_{j-1}}\}$. We also define a mapping $f : X \rightarrow \{1, \dots, p\}$ such that $f(x) = i$ if $x \in X_i$.

In the first iteration say we choose partition $X_1$. Then the algorithm will sample an element from $X_1$ with probability proportional to the squared norm of the vector. After $(j-1)$ iterations $w_x$ will be the orthogonal projection of $v_x$ onto the subspace orthogonal to $\spn\{v_{x_1}, v_{x_2}, \dots, v_{x_{j-1}}\}$. This is a consequence of the fact that
$$(\Pi_{v_{x_1}}\Pi_{v_{x_1}}\cdots \Pi_{v_{x_{j-1}}})=\Pi_{H_{j}}.$$

\noindent 
Hence in the $(j-1)$-th iteration, $w_x = \Pi_{H_{j}} (v_x)$ for all $x \in X$. Therefore, the probability that the sequence $\tau$ is the output of the algorithm is
\begin{align*} 
&\mathbb{P}(\pi = \tau)
= \prod_{j=1}^k \frac{\norm{\Pi_{H_{j}}(v_{x_j})}^2}{\sum\limits_{x \in X_{f(x_j)}} \norm{\Pi_{H_{j}}(v_{x})}^2.} \eqlabel{1} 
\end{align*}
The numerator of above is $\det(V_SV_S^\top)$ by Lemma~\ref{lemma:det_product}.
Let $D_{x_1, \dots, x_k}$ denote the denominator. For each term in the denominator
$\sum\limits_{x \in X_l} \norm{\Pi_{H_j}(v_x)}^2 = \norm{V_{X_l} - V_{X_l}'}_F^2
$
where $\norm{\cdot}_F$ denotes the Frobenius norm and $V_{X_l}'$ is the rank $j-1$ matrix with rows $\{v_{x}'\}_{x \in X_l}$ such that $v_x'$ is the projection of vector $v_x$ on  $H_j$. By a  result on low rank approximations (see Lemma~\ref{fact:lowRankApprox}), we can bound the above quantity as 
\begin{align*}
\sum\limits_{x \in X_l} \norm{\Pi_{H_j} (v_x)}^2  &\geq  \sum\limits_{t=j}^n \sigma_{l,t}^2 \geq \frac{1}{\beta^2}\sum\limits_{t=j}^n \sigma_t^2
\end{align*}
where $\sigma_{l,t}$ is the $t$-th singular value of $V_{X_l}$ and second inequality is due to the $\beta$-balanced property of the partition.
Using above, the denominator of \eqref{1} becomes
\begin{align*} 
D_{x_1, \dots, x_k} &\geq \prod_{j=1}^k\frac{1}{\beta^2} \sum\limits_{t=j}^n \sigma_t^2 \geq \frac{1}{\beta^{2k}} \sum\limits_{ t_1 < \cdots < t_{k}} \sigma_{t_1}^2 \cdots \sigma_{t_{k}}^2.
\end{align*}
\noindent 
By applying Lemma~\ref{lemma:singularValues}, it then follows
$$
D_{x_1, \dots, x_k}  \geq \frac{1}{\beta^{2k}} \sum\limits_{|S| = k} \det(V_SV_S^\top) \geq \frac{1}{\beta^{2k}} \sum\limits_{S \in \mathcal{B}} \det(V_SV_S^\top).$$
Thus,  $\mathbb{P}(\pi = \tau) \leq \beta^{2k} \frac{\det(V_SV_S^\top)}{\sum_{T \in \mathcal{B}} \det(V_TV_T^\top) }.$
Since the order in which the partitions are considered by the algorithm is fixed, the vectors of each $X_i$ in $\tau$ can be permuted amongst themselves and the output set will still be $S$. Correspondingly there are $\eta_k = k_1! \cdot k_2! \cdot \cdot \cdot k_p!$ valid permutations of $\tau$. Let $T_S$ be the set of all valid permutations of elements of $S$, then
$
\tilde{q}_S = \sum\limits_{\tau \in T_S} \mathbb{P}(\pi = \tau) \leq \eta_k \cdot \beta^{2k} \cdot q^\star_S.
$\end{proof}

\begin{figure*}[t]
\includegraphics[height=4.2cm]{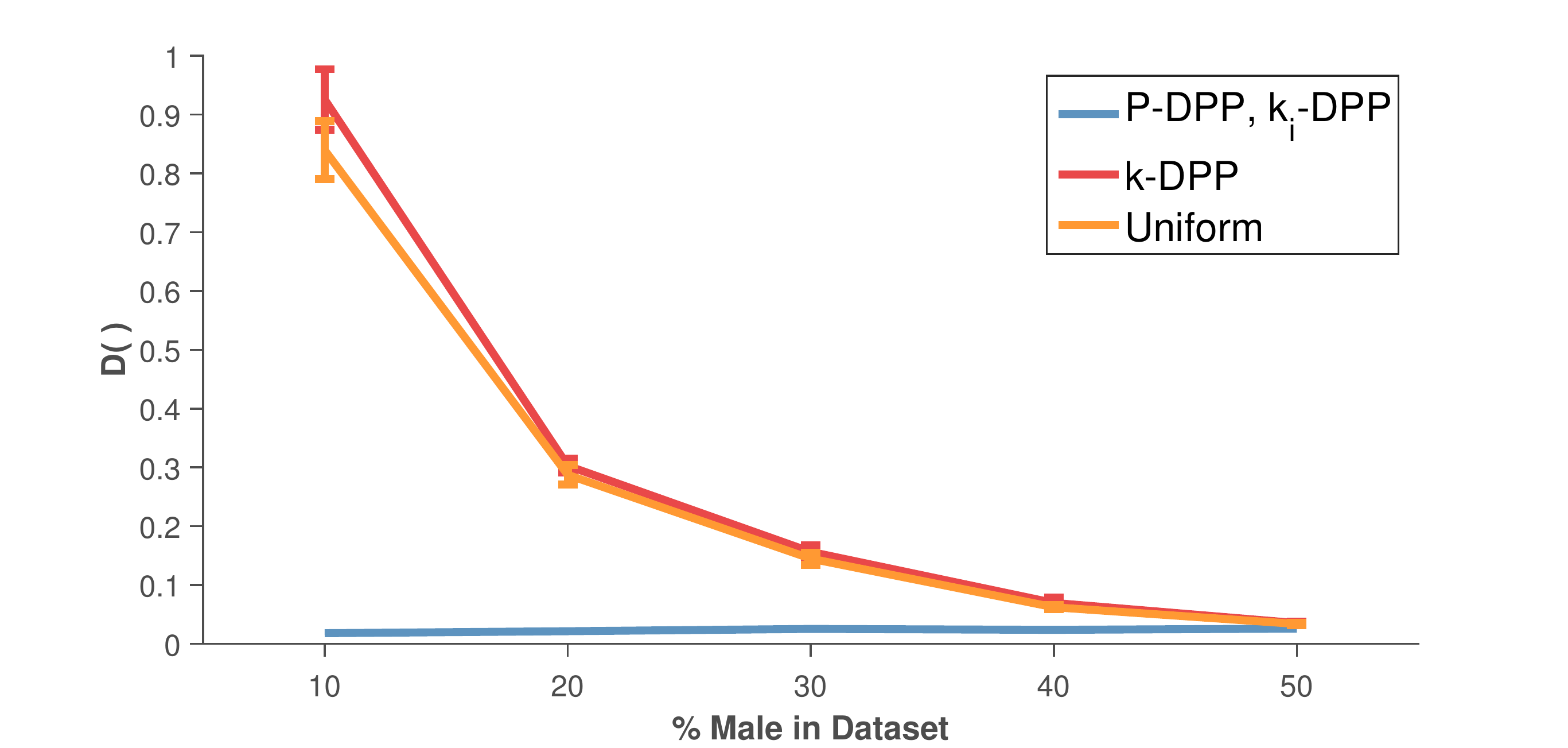}
\includegraphics[height=4.2cm]{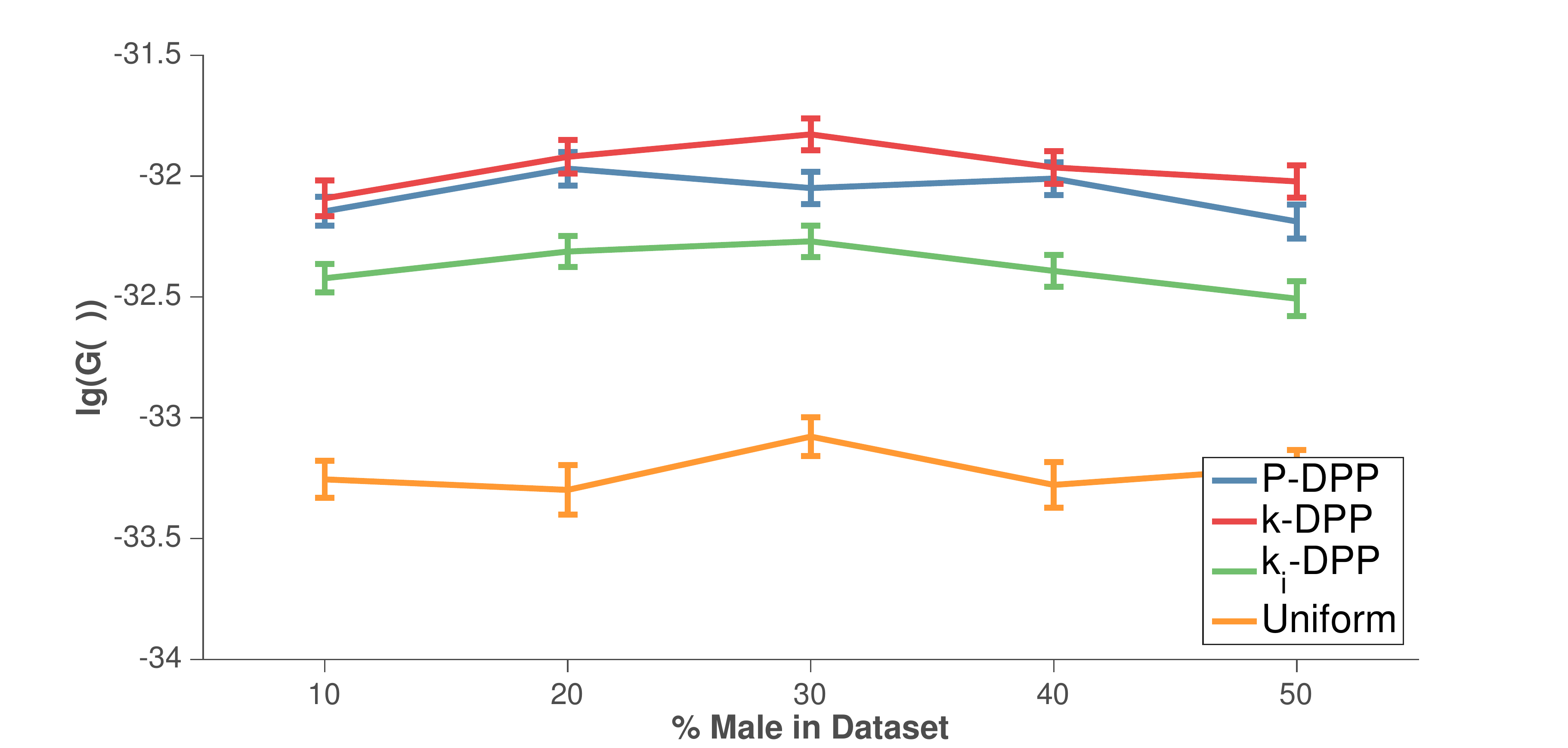}
\caption{\footnotesize {The mean relative unfairness measure $D(\cdot)=D^{\mathrm{un}}(\cdot)$ with respect to the uniform distribution over $4$ classes, and the logarithm of the geometric diversity $\lg(G(\cdot))$ are reported in the left and right figures respectively for $n = 200$ repetitions. Error bars represent the standard error of the mean.}}
\label{fig:images}
\end{figure*}


\subsection{$\beta$-balanced property for random data}\label{sec:betaConcentration}
For a given matrix $V \in \R^{m \times n}$, suppose we choose the partitions randomly. For each element $x \in X$, we put $x$ in $X_i$ with probability $1/p$. 
Using the Matrix Chernoff bounds \cite{tropp2012user}, we prove the following theorem.

\begin{theorem}\label{thm:beta-bound}
Assume that all the rows $v_j$ (for $j\in X=\{1,2,\ldots,m\}$) of $V\in \R^{m \times n}$ satisfy $v_j^\top (V^\top V)^{-1} v_j \leq \frac{\delta^2}{8p\log (np)}$, where $\delta \in (0,1)$ is a constant. 
If $X$ is randomly partitioned into $X=X_1\cup X_2 \cup \ldots \cup X_p$
then with probability at least $\frac{1}{e}$, the partition $X_1, \dots, X_p$ is $\beta$-balanced with respect to $V$, for 
$\beta = \sqrt{(1+\delta)p}$.
\end{theorem}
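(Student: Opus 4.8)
The plan is to translate the $\beta$-balance condition into a Loewner (positive semidefinite) comparison $V_{X_i}^\top V_{X_i} \succeq \tfrac{1}{(1+\delta)p}\,V^\top V$ for every part $i$, and then bound the probability that such a comparison fails by the matrix Chernoff inequality, followed by a union bound over the $p$ parts.

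Write $M := V^\top V$ and $M_i := V_{X_i}^\top V_{X_i}$, so that $\sigma_j^2 = \lambda_j(M)$ and $\sigma_{i,j}^2 = \lambda_j(M_i)$. Since $\beta^2 = (1+\delta)p$, the partition is $\beta$-balanced exactly when $\lambda_j(M_i) \geq \tfrac{1}{(1+\delta)p}\lambda_j(M)$ for all $i$ and $j$, and by the Courant--Fischer min--max principle it suffices to establish $M_i \succeq \tfrac{1}{(1+\delta)p} M$ for each $i$. Because the hypothesis presupposes $M$ invertible, I would whiten: set $u_j := M^{-1/2} v_j$, so that $\sum_{j=1}^m u_j u_j^\top = M^{-1/2} M M^{-1/2} = I_n$ and $\|u_j\|_2^2 = v_j^\top M^{-1} v_j \leq \tfrac{\delta^2}{8p\log(np)} =: R$. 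Conjugating the target inequality by $M^{-1/2}$ turns it into $\sum_{j \in X_i} u_j u_j^\top \succeq \tfrac{1}{(1+\delta)p} I_n$, i.e.\ $\lambda_{\min}\!\bigl(\sum_{j\in X_i}u_ju_j^\top\bigr) \geq \tfrac{1}{(1+\delta)p}$.

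Now fix $i$ and write $\sum_{j\in X_i} u_j u_j^\top = \sum_{j=1}^m Y_j$ with $Y_j := \mathbf{1}[j \in X_i]\,u_j u_j^\top$. Since elements are assigned to parts independently, the $Y_j$ are independent positive semidefinite matrices with $\lambda_{\max}(Y_j) \leq \|u_j\|_2^2 \leq R$, and $\sum_j \mathbb{E}[Y_j] = \tfrac1p \sum_j u_j u_j^\top = \tfrac1p I_n$ has minimum eigenvalue $\mu_{\min} = 1/p$. Applying the matrix Chernoff lower-tail bound of \cite{tropp2012user} with deviation $\theta := \tfrac{\delta}{1+\delta} \in (0,1)$, chosen so that $(1-\theta)\mu_{\min} = \tfrac{1}{(1+\delta)p}$, gives
\[
\mathbb{P}\Bigl[\lambda_{\min}\Bigl(\sum_{j\in X_i}u_ju_j^\top\Bigr) < \tfrac{1}{(1+\delta)p}\Bigr] \;\leq\; n\cdot\Bigl(\tfrac{e^{-\theta}}{(1-\theta)^{1-\theta}}\Bigr)^{\mu_{\min}/R}.
\]
With $\theta = \tfrac{\delta}{1+\delta}$ the base equals $\exp\!\bigl(\tfrac{\log(1+\delta)-\delta}{1+\delta}\bigr)$ and $\mu_{\min}/R = \tfrac{8\log(np)}{\delta^2}$, so the right-hand side is at most $n\cdot(np)^{-\gamma(\delta)}$ with $\gamma(\delta) := \tfrac{8\,(\delta-\log(1+\delta))}{\delta^2(1+\delta)}$.

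A union bound over the $p$ parts then bounds the probability that the partition fails to be $\beta$-balanced by $p\cdot n\cdot(np)^{-\gamma(\delta)} = (np)^{1-\gamma(\delta)}$; since $\gamma$ is decreasing on $(0,1)$ with $\gamma(\delta) > 4(1-\log 2) \approx 1.23 > 1$, the exponent is strictly negative, and a short computation shows this quantity is at most $1 - \tfrac1e$, which yields the theorem. I expect this final quantitative step to be the only real obstacle: the standard relaxation $\tfrac{e^{-\theta}}{(1-\theta)^{1-\theta}} \leq e^{-\theta^2/2}$ is too lossy here (it only produces exponent $\tfrac{4}{(1+\delta)^2}$, which degenerates to $1$ as $\delta \to 1$), so one must keep the sharp form of the Chernoff bound, and the constant $8$ in the leverage-score hypothesis is exactly what is needed for $\gamma(\delta)$ to exceed $1$ by enough to absorb the dimension-and-union factor $np$. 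The whitening reduction and the verification of the matrix-Chernoff hypotheses are routine.
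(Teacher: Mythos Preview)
Your proposal is correct and follows the paper's route: whiten, apply the matrix Chernoff lower tail to each part, and combine over the $p$ parts. Two execution details differ: the paper sets the deviation to $\varepsilon=\delta/2$ (establishing the slightly stronger $\lambda_j(M_i)\geq\tfrac{1-\delta/2}{p}\lambda_j(M)$ and then invoking $\tfrac{1}{1-\delta/2}\leq 1+\delta$), and with that choice the \emph{relaxed} bound $n\,e^{-\varepsilon^2\mu_{\min}/2R}$ already yields per-part failure probability exactly $1/p$---so your concern that the simplified Chernoff form is too lossy is an artifact of targeting $\theta=\tfrac{\delta}{1+\delta}$ directly rather than the larger $\varepsilon=\delta/2$; and the paper then combines parts by multiplying success probabilities $(1-1/p)^p$ rather than by the union bound you use.
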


\noindent
To prove Theorem~\ref{thm:beta-bound} we will use the following matrix concentration inequality.
\begin{lemma}[Matrix Chernoff bound, see e.g. \cite{tropp2012user}] \label{thm:matrix-chernoff}
Given independent, random, Hermitian matrices $M_1, \dots, M_m$ that satisfy
\[M_i \succeq 0 \text{ and } \lambda_{\max}(M_i) \leq R ~~\text{ for all }i\]
it holds
\[\mathbb{P}\big[ \lambda_{\min}\big( \sum_{i=1}^m M_i \big) \leq (1 - \delta)\mu_{\min} \big] \leq n \cdot e^{-\delta^2\mu_{\min}/2R}\]
where $0 \leq \delta \leq 1$, $\mu_{\min} = \lambda_{\min}(\sum_{i=1}^m \mathbb{E}[M_i])$.
\end{lemma}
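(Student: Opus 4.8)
The plan is to prove this lower-tail estimate by the matrix Laplace transform method, following the template of Ahlswede--Winter and Tropp. Writing $Y = \sum_{i=1}^m M_i$, I would first convert the eigenvalue tail event into a trace-exponential estimate: for any $\theta > 0$, the event $\{\lambda_{\min}(Y) \le t\}$ equals $\{e^{-\theta \lambda_{\min}(Y)} \ge e^{-\theta t}\}$, so Markov's inequality gives $\mathbb{P}[\lambda_{\min}(Y) \le t] \le e^{\theta t}\,\mathbb{E}[e^{-\theta \lambda_{\min}(Y)}]$. Since $e^{-\theta \lambda_{\min}(Y)} = \lambda_{\max}(e^{-\theta Y}) \le \tr{e^{-\theta Y}}$, the whole problem reduces to controlling the trace moment-generating function $\mathbb{E}[\tr{e^{-\theta Y}}]$.

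The crucial step is to decouple the independent summands inside this trace MGF. Here I would invoke the Lieb concavity theorem --- specifically its standard corollary that the map $A \mapsto \tr{\exp(H + \log A)}$ is concave on the positive-definite cone --- which (via Jensen applied to the independent $M_i$ one at a time) yields subadditivity of the matrix cumulant generating function: $\mathbb{E}[\tr{\exp(\sum_i (-\theta M_i))}] \le \tr{\exp(\sum_i \log \mathbb{E}[e^{-\theta M_i}])}$. This is the only deep ingredient, and I expect it to be the main obstacle, since without it there is no way to factor the expectation over noncommuting random matrices; everything downstream is a matrix lift of the scalar Chernoff computation via the transfer rule (monotonicity of the functional calculus).

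Next I would bound each per-summand term. Using $0 \preceq M_i \preceq R I$ together with the scalar inequality $e^{-\theta x} \le 1 - \tfrac{1-e^{-\theta R}}{R} x$ (valid for $x \in [0,R]$, since $e^{-\theta x}$ is convex and lies below its chord), the transfer rule gives $\mathbb{E}[e^{-\theta M_i}] \preceq I - \tfrac{1-e^{-\theta R}}{R}\,\mathbb{E}[M_i]$; then $\log(I+A) \preceq A$ yields $\log \mathbb{E}[e^{-\theta M_i}] \preceq -\tfrac{1-e^{-\theta R}}{R}\,\mathbb{E}[M_i]$. Summing and writing $g(\theta) = \tfrac{1-e^{-\theta R}}{R} > 0$, monotonicity of $A \mapsto \tr{\exp(A)}$ under $\preceq$ gives $\mathbb{E}[\tr{e^{-\theta Y}}] \le \tr{\exp(-g(\theta) \sum_i \mathbb{E}[M_i])} \le n\, e^{-g(\theta)\mu_{\min}}$, where the final step uses that every eigenvalue of $\sum_i \mathbb{E}[M_i]$ is at least $\mu_{\min} = \lambda_{\min}(\sum_i \mathbb{E}[M_i])$, so the trace of the exponential is dominated by $n$ copies of its largest entry $e^{-g(\theta)\mu_{\min}}$.

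Finally I would set $t = (1-\delta)\mu_{\min}$ and optimize, obtaining $\mathbb{P}[\lambda_{\min}(Y) \le (1-\delta)\mu_{\min}] \le n\exp\!\big(\theta(1-\delta)\mu_{\min} - g(\theta)\mu_{\min}\big)$. The choice $\theta = \tfrac{1}{R}\log\tfrac{1}{1-\delta}$ minimizes the exponent and collapses it to the scalar Chernoff rate, producing the factor $\big[e^{-\delta}/(1-\delta)^{1-\delta}\big]^{\mu_{\min}/R}$. A one-variable calculus estimate, namely $-\delta - (1-\delta)\log(1-\delta) \le -\delta^2/2$ for $\delta \in [0,1)$ (checked by noting both sides vanish at $\delta=0$ and comparing derivatives), then bounds this by $e^{-\delta^2\mu_{\min}/(2R)}$, giving the claimed $n \cdot e^{-\delta^2 \mu_{\min}/(2R)}$.
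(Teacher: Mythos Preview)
The paper does not prove this lemma; it is quoted as a known result from \cite{tropp2012user} and used as a black box in the proof of Theorem~\ref{thm:beta-bound}. So there is no ``paper's proof'' to compare against.

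That said, your proposal is a correct and faithful sketch of the standard proof as it appears in Tropp's user-friendly treatment: Laplace transform plus Markov for the tail, Lieb's concavity (via the subadditivity corollary $\mathbb{E}\,\tr{\exp(\sum_i X_i)} \le \tr{\exp(\sum_i \log \mathbb{E}\,e^{X_i})}$) to decouple the independent summands, the chord bound $e^{-\theta x} \le 1 - \tfrac{1-e^{-\theta R}}{R}x$ on $[0,R]$ lifted to matrices, operator monotonicity of $\log$ together with $\log(I+A)\preceq A$, and finally the scalar optimization in $\theta$ with the elementary estimate $-\delta-(1-\delta)\log(1-\delta)\le -\delta^2/2$. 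Each step checks out. The one place you compress slightly is in passing from $\mathbb{E}[e^{-\theta M_i}] \preceq I - g(\theta)\mathbb{E}[M_i]$ to the bound on its logarithm: strictly you need operator monotonicity of $\log$ before applying $\log(I+A)\preceq A$, but this is standard and the conclusion is correct.
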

 
\begin{proof}[Proof of Theorem~\ref{thm:beta-bound}]
To use the Matrix Chernoff bound, we design our random experiment in the following way. We are given vectors $v_1, \dots, v_m \in \mathbb{R}^n$ which are rows of matrix $V \in \R^{m \times n}$. Note that the singular values $\sigma_1 \geq \cdots \geq \sigma_n$ are the eigenvalues of $M := V^\top V = \sum_{i=1}^m v_iv_i^\top$.  We will form partitions by putting each vector in $X_i$ with $1/p$ probability.

Consider the formation of one such partition $X_i$. Let $Y_j$ be the random variable taking value $v_jv_j^\top$ with probability $1/p$ and 0 with probability $(1-1/p)$. $X_i$ will be all those elements for which we do not sample 0. Then for this instance we have that
\[M_i := V_{X_i}^\top V_{X_i}  = \sum_{j=1}^m Y_j.\]
Let $u_j := (pV^\top V)^{-\frac{1}{2}} v_j$, $Z_j = u_j u_j^\top$ and $\widetilde{M}_i:=\sum_{j=1}^m Z_j$. Then it can be seen that 
 $$\mathbb{E}\insquare{\widetilde{M}_i} = I.$$ Let $\varepsilon = \delta/2.$
Note that 
\begin{align*}
(1-\varepsilon)\cdot I \preceq \widetilde{M}_i  \Leftrightarrow (1 - \varepsilon)\cdot M \preceq pM_i.
\end{align*}
We know that if $A \preceq B$, then for all $j$, $\lambda_j(A) \leq \lambda_j(B)$ -- see e.g. \cite{bhatia2013matrix}.
Therefore if we show that $(1-\varepsilon)\cdot I \preceq \widetilde{M}_i$, then for all $j \in \{1, \dots, n\}$, 
\[\lambda_j(M_i) \geq \frac{1-\varepsilon}{p}\lambda_j(M).\]
This implies that $V_{X_i}$ will satisfy the $\beta$-balanced condition for $\beta = \sqrt{\frac{p}{1-\varepsilon}}$.
To show that $\widetilde{M}_i \succeq (1-\varepsilon)\cdot I$ holds (with decent probability), it is enough to show that $\lambda_{\min}(\widetilde{M}_i) \geq (1- \varepsilon)$.  
We will show it  using Matrix concentration inequalities. But first we need to bound $\lambda_{\max}(Z_j)$.
\[\lambda_{\max}(Z_j) \leq \norm{u_j}^2 = pv_j^\top (V^\top V)^{-1} v_j \leq \frac{\varepsilon^2}{2\log (np)}.\]
Using Lemma~\ref{thm:matrix-chernoff}, we get
\begin{align*}
\mathbb{P}\insquare{\lambda_{\min}\inparen{\widetilde{M}_i}\leq (1- \varepsilon)} &\leq n \cdot e^{-\varepsilon^2/2R} \\
&=n\cdot e^{-\log (np)} = \frac{1}{p}.
\end{align*}
From the above two inequalities, we have that 
\begin{align*}
\mathbb{P}\insquare{\widetilde{M}_i \succeq (1-\varepsilon)\cdot I} &\geq 1 - \mathbb{P}\insquare{\lambda_{\min}\inparen{\widetilde{M}_i} \leq (1- \varepsilon)}\\
& \geq 1 - \frac{1}{p}.
\end{align*}
Hence the probability that all the partitions satisfy this $\beta$-balanced condition, for $\beta = \sqrt{\frac{p}{1-\varepsilon}}$, is atleast
\[\inparen{ 1- \frac{1}{p}}^p = \frac{1}{e}.\]
Since $\varepsilon = \delta/2$ and $ 0 \leq \delta \leq 1$, it can be seen that 
\[\frac{1}{1-\varepsilon} \leq 1 + 2\varepsilon = 1 + \delta.\]
Therefore the partition is $\beta$-balanced, for $\beta = \sqrt{(1+\delta)p}$, with probability $\geq 1/e$.
\end{proof}

\noindent
The quantity $v_j^\top (V^\top V)^{-1} v_j$ is also called the \textit{statistical leverage score} of $v_j$ with respect to $V^\top V$. For two partitions, the theorem states that if the leverage score of all rows is $O(\frac{1}{\log n})$, then the partitions are $\beta$-balanced for $\beta \approx \sqrt{2}$.

\section{Price of Fairness}
\label{sec:priceOfFairness}
In this section we present conditions under which the $k$-DPP and $P$-DPP distributions are close to each other. 
Note that the support of a $P$-DPP is a subset of the support of the corresponding $k$-DPP. 
Thus,  a natural definition of the \textit{price of fairness} is the KL-divergence between them.
\begin{definition}[Price of Fairness]
Given a matrix $V \in \R^{m \times n}$,  partitions $X_1, \dots, X_p$  and integers $k_1, \dots, k_p$, let $k = k_1 + \cdots + k_p$. Suppose $q$ is the distribution defined by $k$-DPP over subsets of size $k$ and $q^\star$ is the distribution defined by $P$-DPP over subsets with $k_i$ elements from each $X_i$.
Then, the price of fairness is $D_{KL}(q^\star||q)$.
\end{definition}
\noindent
We define the following property for the input data and analyze its price of fairness. 

\begin{definition}[$\delta$-drop] \label{defn:delta-drop}
For $0\leq \delta \leq 1$, the partition $X_1, \dots, X_p$ is called a $\delta$-drop partition with respect to $V$ and $k_1, \dots, k_p$ if for all $i \in \{1, \dots, p\}$,
$\sigma_{i,k_i+1} \leq \delta\sigma_{i,k_i}.$ Here $\sigma_{i,j}$ is the $j$-th largest singular value of $V_{X_i}$.
\end{definition}
\noindent
Roughly, this says that, if $\delta$ is small, then each of  the matrices $V_{X_i}$ is effectively a rank-$k_i$ matrix.
Such a notion of low effective rank appears frequently in the machine learning literature \cite{roy2007effective,drineas1999clustering}.
We prove the following theorem  that asserts that if the $\delta$-drop condition is satisfied, then we can be sure that most of the probability mass is concentrated on subsets which satisfy  partition constraints. In such a case, sampling a $k$ sized subset using any $k$-DPP algorithm will output a subset which satisfies partition constraints with high probability. 

\begin{theorem} \label{thm:price-of-fairness}
Let $\eps \in (0,1)$ and suppose that the partition $X_1, \dots, X_p$ is $\delta$-drop w.r.t. $V$ and $k_1, \dots, k_p$, with $\delta \leq \frac{\varepsilon}{nN_0}$ and $N_0 := \binom{k+p-1}{p-1}$.
If $n \geq \sqrt{2}k \cdot \big( \frac{\gamma}{\sigma_n} \big)^2$ (with $\gamma := \max\{\sigma_{i,1}\}_i$, where $\sigma_{i,1}$ is the largest singular value of $V_{X_i}$ and $\sigma_n$ is the smallest non-zero singular value of $V$)
then the price of ensuring fairness is $D_{KL}(q^\star||q) \leq \log \frac{1}{(1-\varepsilon)}.$
\end{theorem}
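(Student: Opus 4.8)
The plan is to bound $D_{KL}(q^\star \| q)$ by expressing it in terms of partition functions. Write $Z = \sum_{T : |T| = k} \det(V_T V_T^\top)$ for the normalizer of the $k$-DPP and $Z^\star = \sum_{T \in \mathcal{B}} \det(V_T V_T^\top)$ for that of the $P$-DPP. Since $q^\star$ is supported on $\mathcal{B} \subseteq \supp(q)$ and for every $S \in \mathcal{B}$ we have $q^\star_S / q_S = Z / Z^\star$, the divergence collapses to the single term $D_{KL}(q^\star \| q) = \sum_{S \in \mathcal{B}} q^\star_S \log(Z/Z^\star) = \log(Z/Z^\star)$. So the entire task reduces to showing $Z \le \frac{1}{1-\varepsilon} Z^\star$, i.e. that the sets violating the partition constraints contribute at most an $\varepsilon$-fraction of the total $k$-DPP mass.

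The key tool is Lemma~\ref{lemma:singularValues} (Cauchy--Binet): applied to each block $V_{X_i}$, it says $\sum_{T \subseteq X_i, |T| = \ell} \det(V_T V_T^\top) = e_\ell(\sigma_{i,1}^2, \dots, \sigma_{i,n}^2)$, the $\ell$-th elementary symmetric polynomial in the squared singular values of $V_{X_i}$. Because the parts $X_1, \dots, X_p$ are disjoint, a general $k$-subset $S$ decomposes as $S = \bigsqcup_i (S \cap X_i)$ with $|S \cap X_i| = \ell_i$ and $\sum_i \ell_i = k$, and $\det(V_S V_S^\top)$ is \emph{not} simply the product over blocks — but I can still organize the sum $Z$ by the composition $(\ell_1, \dots, \ell_p)$. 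The number of such compositions is exactly $N_0 = \binom{k+p-1}{p-1}$. The $\delta$-drop hypothesis forces, within each block, the mass at level $\ell_i > k_i$ to be tiny relative to the mass at level $k_i$: since $\sigma_{i, k_i + 1} \le \delta \sigma_{i, k_i}$, every squared singular value beyond the $k_i$-th is at most $\delta^2$ times $\sigma_{i,k_i}^2$, so $e_{\ell_i}(\sigma_{i,\cdot}^2) \le \delta^{2(\ell_i - k_i)} \binom{n}{\ell_i - k_i}\, \gamma^{\,2(\ell_i-k_i)} / \sigma_n^{\,\cdots} \cdot e_{k_i}(\sigma_{i,\cdot}^2)$ up to polynomial-in-$n$ factors; this is where the ratio $(\gamma/\sigma_n)^2$ and the bound $\delta \le \varepsilon/(nN_0)$ enter. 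The condition $n \ge \sqrt{2} k (\gamma/\sigma_n)^2$ is presumably used to guarantee that the ``correct'' composition $(k_1, \dots, k_p)$ is actually the dominant one — i.e. that pushing mass to higher levels within a block is penalized more than whatever is gained — via a comparison of $e_{k_i}$ against $e_{k_i - 1}$ using the fact that there are enough large singular values ($\ge n$ nonzero, each at least $\sigma_n$) to make the symmetric polynomial grow.

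Concretely, the steps I would carry out are: (1) reduce $D_{KL}(q^\star\|q)$ to $\log(Z/Z^\star)$ as above; (2) write $Z = \sum_{(\ell_1,\dots,\ell_p)} Z_{(\ell_1,\dots,\ell_p)}$ where $Z_{(\ell_1,\dots,\ell_p)} = \sum_{S : |S\cap X_i| = \ell_i} \det(V_S V_S^\top)$, and note $Z^\star = Z_{(k_1,\dots,k_p)}$; (3) for each composition $\ne (k_1,\dots,k_p)$, bound $Z_{(\ell_1,\dots,\ell_p)} \le \delta \cdot n \cdot Z^\star$ (roughly) using the $\delta$-drop estimate on the block where $\ell_i > k_i$ together with Lemma~\ref{fact:lowRankApprox}-style control and the assumption relating $n, k, \gamma, \sigma_n$ to handle blocks where $\ell_i < k_i$; (4) sum over the $N_0$ compositions to get $Z \le Z^\star(1 + N_0 \cdot \delta n) \le Z^\star(1+\varepsilon) \le Z^\star/(1-\varepsilon)$, and conclude. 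The main obstacle is step (3): $\det(V_S V_S^\top)$ for a mixed set $S$ does not factor across parts, so I cannot naively multiply per-block elementary symmetric polynomials; I expect the right move is again Cauchy--Binet / interlacing to relate $Z_{(\ell_1,\dots,\ell_p)}$ to the product $\prod_i e_{\ell_i}(\sigma_{i,\cdot}^2)$ up to controllable factors, and then to compare that product with $\prod_i e_{k_i}(\sigma_{i,\cdot}^2)$ — handling the two cases $\ell_i > k_i$ (use $\delta$-drop) and $\ell_i < k_i$ (use the many-large-singular-values assumption) on a per-block basis. Getting the bookkeeping of the polynomial-in-$n$ losses to fit inside the slack $\delta \le \varepsilon/(nN_0)$ is the delicate part.
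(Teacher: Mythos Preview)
Your overall architecture matches the paper: reduce to $\log(Z/Z^\star)$, stratify $Z$ by compositions $(\ell_1,\dots,\ell_p)$, and use the $\delta$-drop to kill compositions with some $\ell_i>k_i$. But the ``main obstacle'' you flag in step (3) has a one-line resolution you are missing, and your plan to compare each $Z_{(\ell_1,\dots,\ell_p)}$ with $Z^\star$ is the wrong direction.

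The missing ingredient is Fischer's inequality for positive semidefinite matrices: if $S=\bigsqcup_i S_i$ with $S_i\subseteq X_i$, then $\det(V_S V_S^\top)\le \prod_{i=1}^p \det(V_{S_i}V_{S_i}^\top)$, simply because $V_SV_S^\top$ is PSD and the right-hand side is the product of its diagonal blocks. Summing this over all $S$ with profile $(\ell_1,\dots,\ell_p)$ gives the clean upper bound $Z_{(\ell_1,\dots,\ell_p)} \le \prod_{i=1}^p e_{\ell_i}(\sigma_{i,\cdot}^2)$ with no loss --- no interlacing or ``controllable factors'' needed. The paper then bounds each factor crudely: every $\sigma_{i,j}\le \gamma$ and, by $\delta$-drop, $\sigma_{i,j}\le \delta\gamma$ for $j>k_i$, so the factor with $\ell_i>k_i$ picks up $\delta^{2(\ell_i-k_i)}$ together with binomial counts in $n$, yielding $Z_{(\ell_1,\dots,\ell_p)}\le \frac{\varepsilon}{N_0}\,\gamma^{2k}2^k$ for every bad composition.

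The tactical point: do \emph{not} try to lower-bound $Z^\star$ (Fischer only gives an upper bound on block determinants, so you have no handle on $Z^\star$ from below). Instead, the paper compares the bad mass with the \emph{total} mass $Z$, which \emph{does} have an easy Cauchy--Binet lower bound $Z\ge \binom{n}{k}\sigma_n^{2k}\ge (n\sigma_n^2/k)^k$. The hypothesis $n\ge \sqrt{2}k(\gamma/\sigma_n)^2$ is used exactly here, to make $(n\sigma_n^2/k)^k \ge (2\gamma^2)^k$, so that $\sum_{\text{bad}} \le \varepsilon\gamma^{2k}2^k \le \varepsilon Z$. This is different from your guess that the $n$-condition controls a per-block $e_{k_i}$ versus $e_{k_i-1}$ comparison; no such comparison is needed, because compositions with $\ell_i<k_i$ are handled by the crude bound $e_{\ell_i}(\sigma_{i,\cdot}^2)\le \gamma^{2\ell_i}2^{k_i}$ alone. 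Once you have $Z-Z^\star\le \varepsilon Z$, the conclusion $Z/Z^\star\le 1/(1-\varepsilon)$ is immediate.
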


\noindent
We will use the following lemma in the proof.
\begin{lemma} \label{lem:fainess_KL_fraction}
For every  $\eps \in (0,1)$, if 
$$\sum_{S \in \mathcal{C \setminus B}} \det(V_SV_S^\top) \leq \eps \sum_{S \in \mathcal{C}} \det(V_SV_S^\top)$$ then
$$ D_{KL}(q^\star||q) \leq \log \frac{1}{(1-\varepsilon)}.$$
\end{lemma}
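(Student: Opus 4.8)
The plan is to directly bound the KL-divergence using the hypothesis and Lemma~\ref{lem:opt-kl-dist}-style reasoning. First I would recall that $q$ is the $k$-DPP supported on $\mathcal{C} := \{S : |S| = k\}$ with $q_S = \det(V_SV_S^\top)/Z$ where $Z = \sum_{T \in \mathcal{C}} \det(V_TV_T^\top)$, and $q^\star$ is the $P$-DPP supported on $\mathcal{B} \subseteq \mathcal{C}$ with $q^\star_S = \det(V_SV_S^\top)/Z^\star$ where $Z^\star = \sum_{T \in \mathcal{B}} \det(V_TV_T^\top)$. The crucial observation is that $q^\star_S = q_S \cdot (Z/Z^\star)$ for every $S \in \mathcal{B}$, so the ratio $q^\star_S/q_S$ is a constant, namely $Z/Z^\star$, independent of $S$. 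Hence
\[
D_{KL}(q^\star \| q) = \sum_{S \in \mathcal{B}} q^\star_S \log \frac{q^\star_S}{q_S} = \log \frac{Z}{Z^\star} \cdot \sum_{S \in \mathcal{B}} q^\star_S = \log \frac{Z}{Z^\star}.
\]

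Next I would translate the hypothesis into a lower bound on $Z^\star / Z$. The hypothesis says $\sum_{S \in \mathcal{C} \setminus \mathcal{B}} \det(V_SV_S^\top) \leq \varepsilon \sum_{S \in \mathcal{C}} \det(V_SV_S^\top) = \varepsilon Z$. Since $\mathcal{C}$ is the disjoint union of $\mathcal{B}$ and $\mathcal{C} \setminus \mathcal{B}$, we get $Z = Z^\star + \sum_{S \in \mathcal{C} \setminus \mathcal{B}} \det(V_SV_S^\top) \leq Z^\star + \varepsilon Z$, so $Z^\star \geq (1-\varepsilon) Z$, i.e. $Z/Z^\star \leq 1/(1-\varepsilon)$. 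Combining with the identity above yields
\[
D_{KL}(q^\star \| q) = \log \frac{Z}{Z^\star} \leq \log \frac{1}{1-\varepsilon},
\]
which is exactly the claim.

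There is essentially no hard part here — the proof is a short computation — but the one point requiring a little care is checking that $\supp(q^\star) = \mathcal{B} \subseteq \mathcal{C} = \supp(q)$, so that every term $\log(q^\star_S/q_S)$ appearing in the sum is well-defined (no division by zero), and that all sets in $\mathcal{B}$ indeed have $\det(V_SV_S^\top) \neq 0$ contributing to both normalizers (otherwise $q^\star$ would not be well-defined in the first place; this is implicit in the $P$-DPP being a genuine distribution). I would state this non-degeneracy observation briefly and then present the two-line computation. One could also phrase the identity $D_{KL}(q^\star\|q) = \log(Z/Z^\star)$ as an instance of the general fact that the KL-divergence from a renormalized restriction of a distribution to the original distribution equals the log-reciprocal of the retained mass, but giving the direct computation is cleaner and self-contained.
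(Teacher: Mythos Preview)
Your proof is correct and follows essentially the same approach as the paper: both derive $Z^\star \geq (1-\varepsilon)Z$ from the hypothesis and use that $q^\star_S/q_S = Z/Z^\star$ to bound the KL-divergence. Your presentation is in fact slightly sharper, since you observe that the ratio is constant and hence $D_{KL}(q^\star\|q) = \log(Z/Z^\star)$ exactly, whereas the paper bounds each term of the sum individually before summing.
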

\begin{proof}
From the assumption it follows
\begin{align*}
(1 - \varepsilon) \sum_{S \in \mathcal{C}} \det(V_SV_S^\top) \leq \sum_{S \in \mathcal{B}} \det(V_SV_S^\top).
\end{align*}
Hence, for all $S \in \mathcal{C}$,
$$
\frac{\det(V_SV_S^\top)}{(1-\varepsilon)  \sum_{S \in \mathcal{C}} \det(V_SV_S^\top)} \geq \frac{\det(V_SV_S^\top)}{\sum_{S \in \mathcal{B} }\det(V_SV_S^\top)}, $$
which translates to
$$\frac{q^*(S)}{q(S)} \leq \frac{1}{(1-\varepsilon)}.$$
Finally, we obtain
$$D_{KL}(q^*||q) =  \sum_{S \in \mathcal{B}} q^*(S) \log \frac{q^*(S)}{q(S)} \leq \log  \frac{1}{(1-\varepsilon)}.$$
\end{proof}

\begin{proof}[Proof of Theorem~\ref{thm:price-of-fairness}]
We start by decomposing the terms in $\sum_{S \in \mathcal{C \setminus B}} \det(V_SV_S^\top) $ and analyzing each term individually using Lemma~\ref{lem:fainess_KL_fraction}. Given a set $S \subseteq X$, let $S_i := S \cap X_i$. Then $S = \bigcup_{i=1}^p S_i$. Using this, the family $\mathcal{C \setminus B}$ can be decomposed as
\begin{align*}
\mathcal{C \setminus B} &= \{S \subseteq X\mid \exists j ~~ |S \cap X_j| \neq k_j\} \\
&= \inbraces{\bigcup_{i=1}^p S_i \mid \forall j~~ S_j \subseteq X_j \text{ and } \exists j ~~ |S_j| \neq k_j}.
\end{align*}

\noindent
Let $S_{(j_1, \dots, j_p)}$ denote the following family of subsets
\[S_{(j_1, \dots, j_p)} := \{S \subseteq X \mid |S \cap X_i| = j_i \}\]
and, for brevity, let $\mathcal{J}$ denote the following set integer tuples (all but $(k_1, k_2, \ldots, k_p)$)
\[\mathcal{J} := \N^{p}_{\geq 0} \setminus \{(k_1, k_2, \ldots, k_p)\}.\]

\noindent
Given this notation, we can write the following sum as
\begin{align*}
\sum_{S \in \mathcal{C \setminus B}} \det(V_SV_S^\top) = \sum_{(j_1, \dots, j_p) \in \mathcal{J}} \sum_{S \in S_{(j_1, \dots, j_p)}} \det(V_SV_S^\top).
\end{align*}

\noindent
We analyze each term of the above summation individually. We start by noting that 
\[\det(V_{S}V_{S}^\top)  \leq \prod_{i=1}^p \det(V_{S_i}V_{S_i}^\top), \]
where for all $i$, $S_i = S \cap X_i$, this is a simple consequence of the fact that $VV^\top$ is positive semidefinite.
Therefore,
\begin{align*}
\sum_{S \in S_{(j_1, \dots, j_p)}} \det(V_SV_S^\top) \leq \prod_{i=1}^p \sum_{S_i \subseteq X_i, |S_i| = j_i}\det(V_{S_i}V_{S_i}^\top). 
\end{align*}

\noindent
Whenever a set $S$ of cardinality $k$ does not belong to $\cB$, for at least one $i$, we have that $|S_i|=|S \cap X_i|>k_i$.
Let us now analyze how does a sum of the form $\sum_{T\subseteq X_i, |T|=j} \det(V_T V_T^\top)$ behave depending on whether $j\leq k_i$ or $j>k_i$.

\noindent
\textbf{Case 1. $j\leq k_i$ :}
\begin{align*}
\sum_{\substack{T\subseteq X_i, |T| = j}} \det(V_{T}V_{T}^\top)  &= \sum_{1\leq l_1 < \cdots < l_j \leq n} \prod_{j'=1}^{j} \sigma_{i,l_{j'}}^2 \\
&\leq \sum_{l=0}^{j} \binom{k_i}{l} \gamma^{2l} \binom{n-k_i}{j-l} (\gamma\delta)^{2(j-l)}\\
&= \gamma^{2j}\sum_{l=0}^{j} \binom{k_i}{l} \binom{n-k_i}{j-l} \delta^{2(j-l)}\\
&\leq \gamma^{2j}\sum_{l=0}^{j} \binom{k_i}{l} (n-k_i)^{j-l} \delta^{2(j-l)}.
\end{align*}
Since $\delta < \frac{\varepsilon}{nN_0}$,
\[\sum_{\substack{T\subseteq X_i, |T| = j}} \det(V_{S}V_{S}^\top)  \leq \gamma^{2j}2^{k_i}.\]

\noindent
\textbf{Case 2. $j>k_i$ :} 
\begin{align*}
\sum_{\substack{T\subseteq X_i, |T| = j}} \det(V_{T}V_{T}^\top)  &= \sum_{1\leq l_1 < \cdots < l_j \leq n} \prod_{j'=1}^{j} \sigma_{i,l_{j'}}^2 \\
&\leq \sum_{l=0}^{k_i} \binom{k_i}{l} \gamma^{2l} \binom{n-k_i}{j-l} (\gamma\delta)^{2(j-l)}\\
&= \gamma^{2j}\sum_{l=0}^{k_i} \binom{k_i}{l} \binom{n-k_i}{j-l} \delta^{2(j-l)}\\
&= \gamma^{2j}\sum_{l=0}^{k_i} \binom{k_i}{l} (n-k_i)^{j-l} \delta^{2(j-l)}.\\
\end{align*}
Since $\delta < \frac{\varepsilon}{nN_0}$,
\begin{align*}
\sum_{\substack{T\subseteq  X_i, |T| = j}} \det(V_{T}V_{T}^\top)  &\leq \big( \frac{\varepsilon}{N_0} \big)^{j-k_i} \gamma^{2j}\sum_{l=0}^{k_i} \binom{k_i}{l} \frac{1}{n^{j-l}}.
\end{align*}
Since $j  > k_i$, we have
\[\frac{1}{n^{j-l}} \leq \frac{1}{{k_i}^{j-l}} \leq \frac{1}{{k_i}^{k_i-l} \cdot k_i}\]
and 
\[\binom{k_i}{l} \frac{1}{n^{j-l}} \leq {k_i}^{k_i-l}\frac{1}{{k_i}^{j-l}\cdot k_i} \leq \frac{1}{k_i}.\]
Therefore,
\[\sum_{\substack{T\subseteq  X_i, |T| = j}} \det(V_{T}V_{T}^\top)  \leq \inparen{ \frac{\varepsilon}{N_0} }^{j-k_i} \gamma^{2j} \leq \frac{\varepsilon}{N_0}\gamma^{2j}.\]

\noindent
Using the above inequalities, we obtain that for every $(j_1, \ldots, j_p)\in \mathcal{J}$
\begin{align*}
\sum_{S \in S_{(j_1, \dots, j_p)}}\det(V_S V_S^\top) \leq \frac{\varepsilon}{N_0}\gamma^{2k}2^k.
\end{align*}
Note that the size of the set of tuples $\mathcal{J}$ is bounded from above by $|\mathcal{J}| \leq \binom{k+p-1}{p-1}=N_0$. Therefore,
\begin{align*}
\sum_{S \in \mathcal{C \setminus B}} \det(V_SV_S^\top) 
&= \sum_{(j_1, \dots, j_p) \in \mathcal{J}} \sum_{S \in S_{(j_1, \dots, j_p)}} \det(V_SV_S^\top) \\
&\leq N_0 \cdot \frac{\varepsilon}{N_0}\gamma^{2k}2^k = \varepsilon \gamma^{2k}2^k.
\end{align*}

\noindent
It remains to find a lower bound for $\sum_{S \in \mathcal{C}} \det(V_SV_S^\top)$. Using Lemma~\ref{lemma:singularValues}, we obtain
\begin{align*}
\sum_{S \in \mathcal{C}} \det(V_SV_S^\top) &= \sum_{1\leq i_1 < \cdots < i_{k} \leq n} \prod_{j=1}^k \sigma_{i_{k}}^2 \geq \binom{n}{k} \cdot \sigma_n^{2k}.
\end{align*}
By using the inequality $\binom{n}{k} \geq \frac{n^k}{k^k}$ we finally arrive at
\[\sum_{S \in \mathcal{C}} \det(V_SV_S^\top) \geq \inparen{\frac{n}{k} \sigma_n^2 }^{k}. \]

\noindent
Therefore,
\begin{align*}
\frac{\sum_{S \in \mathcal{C \setminus B}} \det(V_SV_S^\top)}{\sum_{S \in \mathcal{C}} \det(V_SV_S^\top)} &\leq \frac{\varepsilon \gamma^{2k}2^k}{ \inparen{ \frac{n}{k} \sigma_n^2 }^{k} } \leq \varepsilon  \cdot \inparen{\frac{\sqrt{2}k\gamma^2}{n\sigma_n^2}}^{k}.
\end{align*}
Using the assumption that $n \geq \sqrt{2}k \cdot \big( \frac{\gamma}{\sigma_n} \big)^2$ we obtain
\[\sum_{S \in \mathcal{C \setminus B}} \det(V_SV_S^\top) \leq \varepsilon \sum_{S \in \mathcal{C}} \det(V_SV_S^\top).\]
and an application of Lemma~\ref{lem:fainess_KL_fraction} finishes the proof.
\end{proof}

\section{Empirical Results} \label{sec:ExpResults}
\renewcommand{\thesubfigure}{(\roman{subfigure})}

\subsection{Algorithms and Baselines}

In each experiment, we compare several different probability distributions from which to select $k$ samples from a dataset: 
As benchmarks we consider the (unconstrained) distributions, $k$-DPP (see Def~\ref{def:kdpp}), and UNIF, which selects a uniformly random subset of size $k$ from the dataset $X$. We compare this against different methods which select from a fair family of allowed subsets, $P$-DPP (see Def~\ref{def:pdpp}), and $k_i$-DPP (see Def~\ref{def:kidpp} below). 

\begin{definition}{\bf ($k_i$-DPP)}
\label{def:kidpp}
Given a dataset $X$, the corresponding feature vectors $V \in \mathbb{R}^{m \times n}$,  a partition $X = X_{1} \cup \cdots \cup X_{p}$ into $p$ parts, and numbers $k_1,\ldots,k_p$,  $k_i$-DPP defines a distribution over $k_1+\cdots+k_p$-sized subsets $S \subseteq X$ that is a product distribution:  for each $i,$ we obtain a  sample $S_{i} \subseteq X_{i}$ of size $k_i$ independently with probability proportional to $\mathbb{P}[{S_i}] \propto \deter{V_{S_i}V_{S_i}^\top}$, and combine these samples to output $S = S_{1} \cup \dotsb \cup S_{p}$.
\end{definition}
Algorithms for $k_i$-DPPs are simply obtained by  {\em independently} using a $k$-DPP sampler with $k = k_i$ on each part $X_i$.
For sampling from all the above listed distribution we use the  Sample and Project algorithm as described in Section~\ref{sec:sample_project}.

\subsubsection{Metrics}
In each experiment, we report the geometric diversity $G( \cdot )$ (see Def~\ref{def:diversity}) and the fairness as measured by the KL-divergence from the desired frequency over parts.
Formally, given a probability distribution $q$ over the $p$ parts of the dataset, we define the relative unfairness measure of a set $S\subseteq X$ as 
$D^q(S):=D_{KL}(q || s),$
where $s=(s_1, \ldots, s_p)$ denotes the vector of frequencies, i.e., $s_i=\frac{|X_i \cap S|}{|S|}$ for $i=1,2, \ldots, p$.
In particular, typically we want to have $D^q(\cdot)$ as small as possible -- ideally equal to $0$.
When $q_i=1/p$ for all $i$, we refer to $D^q$ as $D^{\rm un}$. 
When $q_i=|X_i|/m$, we refer to $D^q$ as $D^{\rm prop}$. 
%


\subsection{Experiment on Image Dataset}
\label{sec:exp1}

\subsubsection{Curated Dataset}
We gathered a collection of images curated using Google image search as follows:
Four search terms were used: (a) ``Scientist Male'', (b) ``Scientist Female'', (c) ``Painter Male'', and (d) ``Painter Female''.\footnote{The images are available at \url{goo.gl/hNukfP}.}

Following \cite{Kulesza2011}, each image was processed with the \texttt{vlfeat} toolbox to obtain sets of 128-dimensional SIFT descriptors \cite{Lowe99,vlfeat}. All such descriptors are collected in a single set and subsampled to roughly $10\%$ of its total size. The resulting set of  $\approx10^4$ descriptors was clustered using the $k$-means algorithm where $k=128$ is the number of means. The feature vector for an image is  the normalized histogram of the nearest clusters to the descriptors in the image.


\subsubsection{Experiment on Biased Datasets}
Our goal is to understand how the bias in the underlying dataset can affect the performance of the different sampling distributions with respect to fairness and geometric diversity. We include all female (b and d) images, but vary how many of the male images (a and c) appear in the dataset in order to create biased sets that have between $10\%$ to $50\%$ male images. The male images are selected uniformly at random from the set of all male scientists and male artists for each repetition in the experiment.
We sample 40 images from each biased dataset; roughly the number that fits on the first page of an image search result. We conduct 200 repetitions.
We place fairness constraints so that $P$-DPP and $k_i$-DPP select exactly $50\%$ of their samples from the male (a and c) images and female (b and d) images, {regardless of the bias in the underlying dataset}. Note that we \emph{do not} enforce constraints across scientist (a and b) images and artist (c and d) images, but measure the unfariness $D^{\rm un}(\cdot)$ %
with respect to all four attributes.

\subsubsection{Results}
With respect to  $D^{\rm un}(\cdot)$, $P$-DPP significantly outperforms $k$-DPP, and UNIF (paired one-sided $t$-tests, $p < 0.05$), see Figure~\ref{fig:images}. As expected, the bias in the underlying dataset can dramatically affect the fairness of UNIF and $k$-DPP as neither approach is designed to correct for such biases. However, $P$-DPP and $k_i$-DPP both enforce fairness constraints; note that this is despite the fact that the sampling was only equal with respect to gender and not profession. The latter does not appear to affect the outcome here.

With respect to the diversity $G(\cdot)$, $P$-DPP has significantly higher $G(\cdot)$ than UNIF and $k_i$-DPP (paired one-sided $t$-tests, $p < 0.05$). Moreover, $P$-DPP performs comparatively to $k$-DPP; the mean diversity of $k$-DPP is higher, but not significantly so. 
Thus, we observe that, when the underlying data is biased, there is a tradeoff between $D^{\rm un}(\cdot)$ (for which $P$-DPP performs best) and $G(\cdot)$ (for which $k$-DPP performs best); however the differences in geometric diversity are negligible while differences in unfairness can be very large.

\subsection{Experiment on Real-World Dataset}

\begin{tiny}
\begin{table*}[] \centering
\fontsize{10}{12}\selectfont
\scalebox{0.90}{
\begin{tabular}{@{}rl|rr|rr|rr|rr|rr|rr|@{}}
   \multicolumn{2}{c|}{}   & \multicolumn{6}{c|}{\textbf{Gender}} & \multicolumn{6}{c|}{\textbf{Race}}  \\ 
   \multicolumn{2}{c|}{}  & \multicolumn{2}{c}{$D^{\mathrm{un}}(\cdot)$} & \multicolumn{2}{c}{$D^{\mathrm{prop}}(\cdot)$} & \multicolumn{2}{c|}{$\log G(\cdot)$} & \multicolumn{2}{c}{$D^{\mathrm{un}}(\cdot)$} &
   \multicolumn{2}{c}{$D^{\mathrm{prop}}(\cdot)$} &
     \multicolumn{2}{c|}{$\log G(\cdot)$}   \\ 
     \multicolumn{2}{c|}{\textbf{Sampling Met.}} & 
   \multicolumn{1}{c}{\textbf{mean}} & \multicolumn{1}{c|}{\textbf{std}} &
   \multicolumn{1}{c}{\textbf{mean}} & \multicolumn{1}{c|}{\textbf{std}} & \multicolumn{1}{c}{\textbf{mean}} & \multicolumn{1}{c|}{\textbf{std}} & \multicolumn{1}{c}{\textbf{mean}} & \multicolumn{1}{c|}{\textbf{std}} &
   \multicolumn{1}{c}{\textbf{mean}} & \multicolumn{1}{c|}{\textbf{std}} & \multicolumn{1}{c}{\textbf{mean}} & \multicolumn{1}{c|}{\textbf{std}} \\
   \midrule
\multirow{2}{*}{\textbf{Uncon.}} 
&UNIF			&0.075	&0.019 &0.001	&0.002 	&-67	&41	&0.357	&0.050&0.001	&0.001  	&-67	&41\\
&$k$-DPP		&0.027	&0.009&0.011	&0.005 	&489	&11	&0.268	&0.038	&0.005	&0.004&487	&12\\
        \hdashline 
\multirow{3}{*}{\textbf{Equal}}   
&$k_i$-UNIF	&0	&0	&0.069	&0 	&-31	&35	&0	&0		&0.282	&0&16	&32\\
&$k_i$-DPP	&0	&0		&0.069	&0 &410	&16	&0	&0		&0.282	&0&366	&16\\
&$P$-DPP		&0	&0	&0.069	&0 	&490	&11	&0	&0	&0.282	&0	&476	&12\\
                \hdashline
\multirow{3}{*}{\textbf{Prop.}}  
&$k_i$-UNIF	&0.074	&0	&0	&0 	&-64	&29	&0.358	&0	&0	&0	&-65	&35\\
&$k_i$-DPP	&0.074	&0		&0	&0 &409	&17	&0.358	&0		&0	&0&426	&15\\
&$P$-DPP		&0.074	&0	&0	&0 	&482	&13	&0.358	&0	&0	&0	&488	&12\\
\bottomrule
\end{tabular}
}
\caption{We report the unfairness ($D^{\mathrm{un}}(\cdot)$ with respect to the uniform distribution over parts, and $D^{\mathrm{prop}}(\cdot)$ with respect to the ``proportional'' distribution, i.e. as in the whole dataset) and diversity ($\log G(\cdot)$) for the different sampling methods on the Adult dataset when (a) the sensitive attribute is Gender or (b) the sensitive attribute is Race. Sets of size 400 were selected, and 100 samples were taken for each. For the samplers that match fairness constraints, we consider both selecting subsets with equal representation and selecting subsets with proportional representation. We note that $P$-DPP has the highest diversity out of all constrained sampling methods regardless of the method of representation. Moreover, the diversity of $P$-DPP matches that of the unconstrained $k$-DPP for Gender under proportional representation and for Race under equal representation.}
\label{tab:adult}
\end{table*}
\end{tiny}

\subsubsection{The Adult Dataset}  
The Adult income dataset~\cite{datasetUCI} consists of roughly 45000 records of subjects each with 14 features such as age, race, education and a binary label indicating whether a subject's incomes is above or below 50K USD.\footnote{Data downloaded from \url{https://archive.ics.uci.edu/ml/datasets/adult}.} 
This dataset has been widely studied in the context of fairness (see,  \cite{YangS17,ZafarVGG17,ZemelWSPD13,Zadrozny04}).

In preprocessing the data we filter out incomplete entries, and from the remaining ones we pick a random subset of 5000 records for our experiments. 
We vectorize the data as follows: 
Categorical fields (with a small number of possible values) we turn into sets of binary fields.
As the dimension $n$ of such feature vectors is quite small -- $50$ -- the DPP framework allows sampling sets of cardinality at most $k\leq 50$. For this reason we enrich the feature vectors in a standard way -- by adding pairwise products of all existing features as separate ones -- this, after removing redundant columns, yields feature vectors of dimension $992$.

\subsubsection{Experiment on Equal and Proportional Representation}
We conduct our experiment across either gender or race as the sensitive attribute. For the former, we use the gender categories provided in the dataset; all entries were labeled either male (68.3\%) or female (31.7\%). 
For the latter, we use the race categories provided in the dataset; we consider the partition Caucasian (85.7\%) and non-Caucasian (14.3\%).

In addition to the algorithms mentioned above, we report the performance of an additional benchmark $k_i$-UNIF, which selects a uniformly random subset of size $k_i$ from $X_i$.

In our subsampling, we consider both equal representation, where each attribute makes up of 50\% of the selected points, and proportional representation, where each attribute is represented with the same ratio as in the original population.

\subsubsection{Results}
We observe that $P$-DPP has the highest diversity out of all constrained sampling methods regardless of the proportion of representation or sensitive attribute; see Table \ref{tab:adult}. 
Surprisingly, the diversity of $P$-DPP matches that of the unconstrained $k$-DPP for Gender under proportional representation and for Race under equal representation. In the other two settings -- Gender under equal representation and Race under proportional representation -- the $P$-DPP score is lower than that of $k$-DPP, but minimally so, and outperforms $k_i$-DPP by several standard deviations. 

We note that  $k_i$-UNIF, although it has very poor geometric diversity as a whole, performs better under equal representation than it does under proportional representation. This fact suggests that there could be value in selecting sensitive attributes equally beyond the consideration of fairness. 

The fact that $P$-DPP performs so well, especially when significantly changing the distribution of sensitive attributes (e.g., for race, from 14.3\% non-Caucasian to 50\% non-Caucasian), is quite surprising. Overall, it appears that one can support very dramatic changes to the underlying distributions of attributes with minimal or even zero loss to geometric diversity by using our $P$-DPP algorithm.

\subsection{Experiment on Price of Fairness}
We look at the effect of the scaling of singular values, suggested by Theorem~\ref{thm:price-of-fairness}, on the sampled subsets of our Algorithm. In this experiment we take an instance of random vectors and use different sampling methods to sample a subset from the dataset, and report the $D^{\rm un}(\cdot)$ and $\log G(\cdot)$ value of the sampled subset. Following this, we scale the tail singular values of the partition matrices by $\delta = O(1/n)$ and again report the $D^{\rm un}(\cdot)$ and $\log G(\cdot)$ values.

We also present a heuristic approach, Scale-And-Sample, for constrained sampling which will use any $k$-DPP algorithm as a sub-routine. The algorithm is simple. For each $V_{X_i}$, scale the  smallest $(n-k_i)$ singular values by $1/n$. Then sample a $\sum_{i=1}^p k_i$ sized subset using any $k$-DPP algorithm.

\begin{table*}[t] \centering
\fontsize{10}{12}\selectfont
\scalebox{0.90}{
\begin{tabular}{@{}rl|rr|rr|rr|rr|@{}}
   \multicolumn{2}{c|}{}   & \multicolumn{4}{c|}{\textbf{Before Scaling}} & \multicolumn{4}{c|}{\textbf{After scaling}}  \\ 
   \multicolumn{2}{c|}{}  & \multicolumn{2}{c}{$D^{\rm un}(\cdot)$} & \multicolumn{2}{c|}{$\log G(\cdot)$} & \multicolumn{2}{c}{$D^{\rm un}(\cdot)$} &  \multicolumn{2}{c|}{$\log G(\cdot)$}   \\ 
     \multicolumn{2}{c|}{\textbf{Sampling Method}} & 
   \multicolumn{1}{c}{\textbf{mean}} & \multicolumn{1}{c|}{\textbf{std}} & \multicolumn{1}{c}{\textbf{mean}} & \multicolumn{1}{c|}{\textbf{std}} & \multicolumn{1}{c}{\textbf{mean}} & \multicolumn{1}{c|}{\textbf{std}} & \multicolumn{1}{c}{\textbf{mean}} & \multicolumn{1}{c|}{\textbf{std}} \\
   \midrule
\multirow{2}{*}{\textbf{Unconstrained}} 
&UNIF			&0.066	&0	&455.7	&1.4	&0.064	&0	&228.6	&215.8\\
&$k$-DPP	&0.063	&0	&457.3	&1.3	&$5.2 \times 10^{-6}$ &0	&397.4	&11.6\\
&Scale-And-Sample	&$5.2 \times 10^{-6}$  &0	&457.5	&1.1	&- &-	&- 	&-\\
        \hdashline
\multirow{3}{*}{\textbf{Constrained}}   
&$k_i$-UNIF	&0	&0		&455.7		&1.3		&0	&0		&226.5		&20.8\\
&$P$-DPP		&0	&0		&457.2		&1.1		&0	&0		&397.5		&9.2\\
\bottomrule
\end{tabular}
}
\caption{We report the unfairness ($D^{\mathrm{un}}(\cdot)$ with respect to the uniform distribution over parts) and diversity for the different sampling methods on a random dataset before and after scaling the singular values by a factor of $1/n$. In this experiment we have $m = 200$ vectors of dimension $n=150$ divided into two partitions (partition 1 has $\frac{m}{3}$ elements and partition 2 has $\frac{2m}{3}$ elements), and we want to sample 50 elements from each partition ($k = 100$).}
\label{tab:fairness}
\end{table*}

\subsubsection{Results}
The results are presented in Table~\ref{tab:fairness}.
It can be seen that after scaling the tail singular values of the partition matrices, the mean $D^{\rm un}(\cdot)$ value for $k$-DPP is very low, and resembles closely the constrained sampling case.
We also note that the Scale-And-Sample approach to constrained sampling suggested earlier performs very well. The mean relative unfairness measure $D^{\rm un}(\cdot )$ is almost zero. Furthermore, the value of the geometric diversity parameter $\log G(\cdot)$ is also similar to unscaled $P$-DPP.

\section{Proofs}\label{sec:proofs}

\subsection{Proof of Lemma~\ref{lem:opt-kl-dist}} \label{proof-opt-kl}
\begin{proof}
We need to show that $q^\star$, as defined below, is the optimal (closest to $\tilde{q}$ in $KL$-distance) distribution over $\mathcal{C}$
\begin{equation*}
q^\star(S) =     
\begin{cases}
  \alpha \cdot \tilde{q}(S) & \text{for } S \in \mathcal{C}\\    
  0 &  \text{ otherwise}   
\end{cases}
\end{equation*}
where $\alpha = 1/ \sum_{S \in \mathcal{C}} \tilde{q}(S)$. Note first that
$D_{KL}(q^\star|| \tilde{q}) = \log \alpha$. Consider any distribution $q$ over $\mathcal{C}$, it remains to show that $D_{KL}(q || \tilde{q}) \geq \log \alpha$. We have
\begin{align*}
D_{KL}(q || \tilde{q}) &=\sum_{S\in \cC} q_S \log \frac{q_S}{\tilde{q}_S}\\
&= \sum_{S\in \cC} q_S \log \frac{q_S}{\alpha \tilde{q}_S}+\log \alpha\\
&= D_{KL}(q || q^\star) + \log \alpha\\
& \geq \log \alpha,
\end{align*}
since $D_{KL}(q || q^\star) \geq 0$. Therefore, the minimum possible value of $D_{KL}(q || \tilde{q})$ is $\log \alpha$, which is achieved for $q = q^\star$.

\end{proof}

\subsection{Proof of Lemma~\ref{lemma:det_product}} \label{proof:det_product}
\begin{proof}
We will prove this lemma by induction.
\noindent
For the base case where there is just one row in $W$, $\det(WW^\top)$ is equal to $\norm{w_1}^2$ which is equal to $\norm{\Pi_{H_1}w_1}^2$.

Let $W'$ be the matrix with $\{w_1, \dots, w_{k-1}\}$ as rows. Assume that the statement is true for $k-1$ rows, i.e., \[\det(W'W'^\top)=\prod_{i=1}^{k-1} \norm{\Pi_{H_i}w_i}^2.\] 
\noindent
Then for $W$ we have, 
\begin{align*}
    WW^\top &= \begin{bmatrix}
           w_k \\
           W' \\
         \end{bmatrix}  \begin{bmatrix}
           w_k^\top & W'^\top 
         \end{bmatrix}
         = \begin{bmatrix}
           \norm{w_k}^2 &  W'^\top w_k \\
           w_k^\top W' & W'W'^\top\\
         \end{bmatrix}.
  \end{align*}
The first row of this matrix is \begin{align*}
     \begin{bmatrix}
           w_k^\top w_k & w_k^\top w_{k-1} & \dots & w_k^\top w_1 
         \end{bmatrix}.
  \end{align*}
\noindent 
Note that elementary row product or addition transformations do not change the determinant. We will apply these transformation to make the entries of first row and first column go to zero.

Let $(i)$ denote the $i$-th row of the above matrix and $WW^\top_{(i,j)}$ denote the $(i,j)$ entry. Then the transformation
\[(1) - \frac{w_k^\top w_{k-1}}{w_{k-1}^\top w_{k-1}} (2) \]
will make the $WW^\top_{(1,2)}$ entry go to zero. For the rest of the elements,
\begin{align*}
WW^\top_{(1,i)} &= w_k^\top w_{k-i+1} - \frac{w_k^\top w_{k-1}}{w_{k-1}^\top w_{k-1}} w_{k-1}^\top w_{k-i+1} \\
&= w_{k-i+1}^\top \Pi_{w_{k-1}}(w_k).
\end{align*}
In particular,
\begin{align*}WW^\top_{(1,1)} &= w_k^\top w_k - \frac{w_k^\top w_{k-1}}{w_{k-1}^\top w_{k-1}} w_{k-1}^\top w_k.\\
& = w_k^\top \Pi_{w_{k-1}}(w_k).
\end{align*}
We continue this way and next apply the transformation 
\[(1) - \frac{ w_{k-2}^\top \Pi_{w_{k-1}}(w_k)}{w_{k-2}^\top w_{k-2}} (3). \]
This will make the $WW^\top_{(1,3)}$ entry go to zero and by the similar analysis as above we get $WW^\top_{(1,i)} =  w_{k-i+1}^\top \Pi_{H'_2}(w_k)$, 
where $H'_i$ is the subspace spanned by the vectors $\{w_{k-1}, \dots, w_{k-i}\}$. After applying $k-1$ row transformations of the form
\[(1) - \frac{ w_{k-j+1}^\top \Pi_{H'_{j-1}}(w_k)}{w_{k-j+1}^\top w_{k-j+1}} (j)\]
we get that the entries $WW^\top_{(1,i)} = 0$, for $i \neq 1$ and 
\[WW^\top_{(1,1)} =  w_k^\top \Pi_{H'_k}(w_k) = \norm{\Pi_{H'_k}(w_k)}^2.\]
Note that $H'_k = H_k$ defined in the statement of the lemma. 

We can apply similar column operations to make all the entries of the first column, except $WW^\top_{(1,1)}$, go to zero. Since these elementary operations do not affect the determinant, we get
Therefore 
\begin{align*}
    \det(WW^\top) &= 
         \det \begin{bmatrix}
           \norm{w_k}^2 &  W'^\top w_k \\
           w_k^\top W' & WW'^\top
         \end{bmatrix}\\
         &= \det \begin{bmatrix}
           \norm{\Pi_{H_k}(w_k)}^2 & 0 \\
           0 & W'W'^\top\\
         \end{bmatrix}.
\end{align*}
Using the induction hypothesis we get,
\begin{align*}
\det(WW^\top) &= \norm{\Pi_{H_k}(w_k)}^2 \cdot \det(W'W'^\top) \\
&= \prod_{i=1}^{k} \norm{\Pi_{H_i}(w_i)}^2.
\end{align*}
\end{proof}

\subsection{Proof of Lemma~\ref{lemma:singularValues}} \label{proof:singularValues}
\begin{proof}
Consider two forms of the characteristic polynomial of the matrix $-VV^\top\in \R^{m \times m}$, i.e.,
\[\det(xI + VV^\top) = \prod_{i=1}^m (x + \sigma_i^2),\]
where $\sigma_1, \dots, \sigma_m$ are the singular values of $V$.

The coefficient of $x^{m-k}$ in $\prod_{i=1}^m (x + \sigma_i^2)$ is equal to $\sum_{1\leq i_1 < i_2<\ldots < i_{k}\leq m} \sigma_{i_1}^2 \sigma_{i_2}^2 \cdot  \cdots \cdot \sigma_{i_{k}}^2$.
Let $\mathcal{W}_k$ be the set of all principal $k$-minors of $VV^\top$. It is a well known fact in linear algebra that the coefficient of $x^{m-k}$ in $\det(xI + VV^\top)$ is equal to 
\[\sum_{W \in \mathcal{W}_k} \det(W) = \sum_{S : |S| = k} \det(V_SV_S^\top).\]
Therefore,
\[\sum_{i_1 < i_2< \cdots < i_{k}} \sigma_{i_1}^2 \sigma_{i_2}^2 \cdot  \cdots \cdot  \sigma_{i_{k}}^2 = \sum\limits_{S:|S| = k} \det(V_SV_S^\top)\]
\end{proof}

\subsection{Proof of Lemma~\ref{lemma:nondegeneracy}} \label{proof:nondegeneracy}
\begin{proof}
We first show that for every part $i$, the corresponding matrix $V_{X_i}$ has rank at least $k$. 
For this, first note that $V$ has at least $k$ non-zero singular values, i.e., $\sigma_k>0$. This follows from the fact that the number of non-zero singular values determines the rank of $V$. The rank of $V$ is certainly at least $k$, since otherwise the diversity of every subset of size $k$ would be zero.

From the $\beta$-balance condition it follows that the number of non-zero singular values of $V_{X_i}$ is the same as for $V$, and hence also the rank of $V_{X_i}$ is at least $k$, as claimed.

Note now that the set of vectors output by the algorithm has determinant zero if and only if for an iteration $j$ there exists a partition $X_i$ such that $|S \cap X_i| < k_i$ and $\norm{w_x} = 0$ for all $x \in X_i$, where $S = \{x_1, \dots ,x_{j-1}\}$.

This is equivalent to saying that all vectors in $V_{X_i}$ belong to the subspace spanned by the vectors in $S$. 
Since the size of $S$ is $j-1$, the dimension of the subspace spanned by the vectors in $V_S$ is at most $j-1$.
Since, by assumption for every $x\in X_i$ the projection of $v_x$ onto the subspace $\mathrm{span}\{v_y: y\in S\}$ is $0$, it implies that the dimension of subspace spanned by vectors in $V_{X_i}$ is less than $j \leq k$. This would contradict the claim proved at the very beginning -- that this dimension is at least $k$, hence the lemma follows.
\end{proof}

\section{Conclusion and Future Work}
\label{sec:discussion}

In this paper we initiated the study of fair and diverse DPP-based sampling for data summarization. 
We provide a novel and fast algorithm that can sample from a DPP that satisfy fairness constraints based on the desired proportion of samples with a given attribute. 
Our algorithm gives provably good guarantees when the data matrix satisfies a  natural $\beta$-balance property.
We prove that a large class of datasets satisfy the $\beta$-balance condition.
We define a notion of {\em price of fairness}, the KL-divergence between the fairness constrained distribution and the unconstrained distribution and theoretically show that, when the data satisfies reasonable properties, this price would be low.
We further show experimentally that adding fairness constraints results in minimal loss to diversity, even when the underlying dataset is very biased, or when the proportion of attributes is changed significantly.

Several challenging problems remain from a technical standpoint; naturally, a first question would be whether the theorems can be improved either by attaining better approximation guarantees, or by weakening the necessary conditions. Extending these results to arbitrary group structures (as opposed to partitions) would be very relevant, but appears to be significantly more challenging. 

From a practical point of view, it remains to be seen what effect de-biasing a sampler has on the end result of a machine learning algorithm (e.g., classification), both on its accuracy and on the bias down the line.

\bibliographystyle{plain}
\bibliography{references}

\end{document}